\newcommand{\alg}{{\textsc{AdaCore}}\xspace}
\newcommand{\craig}{{\textsc{Craig}}\xspace}
\newcommand{\glister}{{\textsc{Glister}}\xspace}
\newcommand{\gradmatch}{{\textsc{GradMatch}}\xspace}
\newcommand{\overbar}[1]{\mkern 1.5mu\overline{\mkern-1.5mu#1\mkern-1.5mu}\mkern 1.5mu}
\newcommand{\tsn}[1]{{\left\vert\kern-0.25ex\left\vert\kern-0.25ex\left\vert #1 
    \right\vert\kern-0.25ex\right\vert\kern-0.25ex\right\vert}}
\definecolor{darkred}{RGB}{150,0,0}
\definecolor{darkgreen}{RGB}{0,150,0}
\definecolor{darkblue}{RGB}{0,0,200}
\newtheorem{theorem}{Theorem}[section]
\newtheorem{corollary}[theorem]{Corollary}
\newcommand{\beq}{\begin{equation}}
\newcommand{\eeq}{\end{equation}}
\newcommand{\Lc}{{\cal{L}}}
\newcommand{\R}{\mathbb{R}}
\def \endprf{\hfill {\vrule height6pt width6pt depth0pt}\medskip}
\icmltitlerunning{Second-order Coresets for Training Machine Learning Models}
\begin{document}

\twocolumn[
\icmltitle{Adaptive Second Order Coresets for Data-efficient
Machine Learning}

% It is OKAY to include author information, even for blind
% submissions: the style file will automatically remove it for you
% unless you've provided the [accepted] option to the icml2022
% package.

% List of affiliations: The first argument should be a (short)
% identifier you will use later to specify author affiliations
% Academic affiliations should list Department, University, City, Region, Country
% Industry affiliations should list Company, City, Region, Country

% You can specify symbols, otherwise they are numbered in order.
% Ideally, you should not use this facility. Affiliations will be numbered
% in order of appearance and this is the preferred way.
% \icmlsetsymbol{equal}{*}

\begin{icmlauthorlist}
\icmlauthor{Omead Pooladzandi}{yyy}
%\icmlauthor{}{sch}
\icmlauthor{David Davini}{sch}
\icmlauthor{Baharan Mirzasoleiman}{sch}
%\icmlauthor{}{sch}
%\icmlauthor{}{sch}
\end{icmlauthorlist}

\icmlaffiliation{yyy}{Department of Electrical \& Computer Engineering, University of California, Los Angeles, USA}
% \icmlaffiliation{comp}{Company Name, Location, Country}
\icmlaffiliation{sch}{Department of Computer Science, University of California, Los Angeles, USA}

\icmlcorrespondingauthor{Omead Pooladzandi}{opooladz@ucla.edu}
% \icmlcorrespondingauthor{Firstname2 Lastname2}{first2.last2@www.uk}

% You may provide any keywords that you
% find helpful for describing your paper; these are used to populate
% the "keywords" metadata in the PDF but will not be shown in the document
\icmlkeywords{Machine Learning, ICML}

\vskip 0.3in
]

% this must go after the closing bracket ] following \twocolumn[ ...

% This command actually creates the footnote in the first column
% listing the affiliations and the copyright notice.
% The command takes one argument, which is text to display at the start of the footnote.
% The \icmlEqualContribution command is standard text for equal contribution.
% Remove it (just {}) if you do not need this facility.

%\printAffiliationsAndNotice{}  % leave blank if no need to mention equal contribution
% \printAffiliationsAndNotice{\icmlEqualContribution} % otherwise use the standard text.

\begin{abstract}
Training machine learning models on massive datasets incurs substantial
computational costs. To alleviate such costs, there has been a sustained effort to develop data-efficient training methods that can carefully select subsets of the training examples that generalize on par with the full training data. However, existing methods are limited in providing theoretical guarantees for the quality of the models trained on the extracted subsets, and may perform poorly in practice. We propose \alg, a method that leverages the geometry of the data to extract subsets of the training examples for efficient machine learning. The key idea behind our method is to dynamically approximate the curvature of the loss function via an exponentially-averaged estimate of the Hessian to select weighted subsets (coresets) that provide a close approximation of the full gradient preconditioned with the Hessian. We prove rigorous guarantees for the convergence of various first and second-order methods applied to the subsets chosen by \alg. Our extensive experiments show that \alg extracts coresets with higher quality compared to baselines and speeds up training of convex and non-convex machine learning models, such as logistic regression and neural networks, by over 2.9x over the full data and 4.5x over random subsets\footnote{ Code is available at \url{ https://github.com/opooladz/AdaCore}}. 
\end{abstract}
\section{Introduction}
Large datasets have been crucial for the success of modern machine learning models. %However, %as datasets grow larger, 
Learning from massive datasets, however, incurs substantial computational costs and becomes very challenging \cite{asi2019importance,strubell2019energy,schwartz2019green}.
Crucially, not all data points are  equally important for learning \cite{birodkar2019semantic,katharopoulos2018not,toneva2018empirical}. While several examples can be excluded from training without harming the accuracy of the final model \cite{birodkar2019semantic,toneva2018empirical}, other points need to be trained on many times to be learned \cite{birodkar2019semantic}.
To improve scalability %and robustness 
of machine learning, it is essential to theoretically understand and quantify the value of different data points on training and optimization.
This %suggests that it would be useful to
allows identifying examples that contribute the most to learning and safely excluding those that are redundant or non-informative.

To find essential data points, recent empirical studies used heuristics such as the fully trained or a smaller proxy model’s uncertainty (entropy of predicted class probabilities) \cite{coleman2020selection}, or forgetting events \cite{toneva2018empirical} to identify examples that frequently transition from being classified correctly to incorrectly. 
Others employ either the gradient norm \cite{alain2015variance,katharopoulos2018not} or the loss \cite{ loshchilov2015online,schaul2015prioritized} to sample important points that reduce variance of  stochastic optimization methods.  
Such methods, however, do not provide any 
theoretical guarantee for the quality of the trained model on the extracted examples.

Quantifying the importance of different data points without training a model to convergence is very challenging. 
First,
the value of each example 
cannot be measured
without updating the model parameters and measuring the loss or accuracy. Second, as the effect of different data points changes throughout training, their value cannot be precisely measured before training converges. Third, to eliminate redundancies, one needs to look at the importance of individual data points as well as the higher-order interactions between data points. Finally, one needs to provide theoretical guarantees for the performance and convergence of the model trained on the extracted data points.

Here, we focus on finding data points that contribute the most to learning and automatically excluding redundancies while training a model.
A practical and effective approach is to carefully select a small subset of training examples that closely approximate the full gradient, i.e., the sum of the gradients over all the training data points.
This idea has been recently employed to find a subset of data points that guarantee convergence of first-order methods to near-optimal solution for training convex models \cite{mirzasoleiman2020coresets}.
However, modern machine learning models are high dimensional and non-convex in nature. 
In such scenarios, subsets selected based on gradient information only capture gradient along the sharp dimensions, and lack diversity within groups of examples
with similar training dynamics. Hence, they representative large groups of examples with a few data points with substantial weights. This introduces a large error in the gradient estimation and result in first-order coresets to perform poorly. 

We propose \textit{ADAptive second-order} \textit{COREsets} (\alg) that incorporates the geometry of the data
to iteratively select weighted subsets (coresets) of training examples that captures the gradient of the loss preconditioned with the Hessian, by maximizing a submodular function.
Such subsets capture the curvature of the loss landscape along different dimensions, and provide convergence guarantees for first and second-order methods.
As a naive use of Hessian at every iteration is prohibitively expensive for overparameterized models, \alg relies on Hessian-free methods to extract coresets that capture the full gradient preconditioned by the Hessian diagonal. Furthermore, \alg exponentially averages first and second-order information in order to smooth the noise in the local gradient and curvature information. 

We first provide a theoretical analysis of our method and prove its convergence for convex and non-convex functions. 
For a $\beta$-smooth and $\alpha$-strongly convex loss function and a subset $S$ selected by \alg that estimates the full preconditioned gradient by an error of at most $\epsilon$, we prove that Newton's method and AdaHessian applied to $S$ with constant stepsize $\eta=\alpha/\beta$ 
converges to a $\beta\epsilon/\alpha$ neighborhood of the optimal solution, in exponential rate. 
For non-convex overparameterized functions such as deep networks, we prove that for a $\beta$-smooth and $\mu$-PL$^*$ loss function satisfying $\|\nabla\mathcal{L}(w)\|^2/2\geq\mu\mathcal{L}(w)$,
(stochastic) gradient descent applied to subsets found by \alg has similar training dynamics to that of training on full data, and converges at a exponential rate.
In both cases, 
\alg leads to a speedup by training on smaller subsets.

Next, we empirically study the examples selected by \alg during training. 
We show that 
as training continues, \alg selects more uncertain or forgettable samples. Hence, \alg effectively determines the value of every learning example, i.e., when and how many times a sample needs to be trained on, and automatically excludes redundant and non-informative instances. Importantly, incorporating curvature in selecting coresets allows \alg to quantify the value of training examples more accurately, and find fewer but more diverse samples than existing methods.

We demonstrate the effectiveness of various first and second-order methods, namely SGD with momentum, Newton's method and AdaHessian, applied to \alg for training models with a convex loss function (logistic regression) as well as models with a non-convex loss functions, namely  ResNet-20, ResNet-18, and ResNet-50, on MNIST, CIFAR10, (Imbalanced) CIFAR100, and  BDD100k \cite{deng2012mnist,cifar10,bdd100k}. Our experiments show that \alg can effectively extract crucial samples for machine learning, resulting in higher accuracy while 
achieving over 2.9x speedup over the full data and 4.5x over random subsets, for training models with convex and non-convex loss functions. 

\section{Related Work}
Data-efficient methods have recently gained a lot of interest. 
However, existing methods often require training the original \cite{ birodkar2019semantic, ghorbani2019data,toneva2018empirical} or a proxy model \cite{coleman2020selection} to convergence, and use
features or predictions of the trained model to find subsets of examples that contribute the most to learning.
While these results empirically confirm the existence of notable semantic redundancies in large datasets \cite{birodkar2019semantic}, such methods cannot identify the crucial subsets before fully training the original or the proxy model on the entire dataset. Most importantly, such methods do not provide any theoretical guarantees for the model's performance trained on the extracted subsets. 

There have been recent efforts to take advantage of the difference in importance among various samples to reduce the variance and improve the convergence rate of stochastic optimization methods. Those that are applicable to overparameterized models employ either the gradient norm \cite{alain2015variance,katharopoulos2018not} or the loss \cite{ loshchilov2015online,schaul2015prioritized} to compute each sample’s importance.  However, these methods do not provide rigorous convergence guarantees and cannot provide a notable speedup. 
A recent study proposed a method, \craig, to find subsets of samples that closely approximate the full gradient, i.e., sum of the gradients over all the training samples \cite{mirzasoleiman2020coresets}. \craig finds the subsets by maximizing a submodular function, and provides convergence guarantees to a neighborhood of the optimal solution for strongly-convex models.
\gradmatch \cite{killamsetty2021grad} proposes a variation to address the same objective using orthogonal matching pursuit (OMP) \cite{killamsetty2021grad}, and 
\glister \citet{killamsetty2020glister} aims at finding subsets that closely approximate the gradient of a held-out validation set.  
However, \textsc{Glister} requires a validation set, and \textsc{GradMatch} 
uses OMP which may return subsets as little as 0.1\% of the intended size. Such subsets are then augmented with random samples. 
In contrast, our method successfully finds subsets of higher quality by preconditioning the gradient by the Hessian information.\looseness=-1
\section{Background and Problem Setting}
Training machine learning models often reduces to minimizing an empirical risk function. Given a not-necessarily convex loss $\Lc$, 
one aims to find model parameter vector $w_*$ in the parameter space $\mathcal{W}$ that minimizes the loss $\Lc$ over the training data:
\vspace{-2mm}
\begin{align}\label{eq:problem}
w_* \in {\arg\min}_{w \in \mathcal{W}} \Lc(w), \quad\quad\quad\\  \Lc(w) := \sum_{i\in V} l_i(w), %+ r(w),
\quad l_i(w)=l(f(x_i,w),y_i).  \nonumber
% \vspace{-3mm}
\end{align}
Here, $V = \{1,\dots,n\}$ is an index set of the training data, $w\in\mathbb{R}^d$ is the  parameters of the model $f$ being trained, and $l_i$ is the loss function associated with training example $i\in V$ with feature vector $x_i\in \mathbb{R}^d$ and label $y_i$. 
We denote the gradient of the loss w.r.t. model parameters by $\mathbf{g} = \nabla \Lc(w)=\frac{1}{|V|}\sum_{i\in V}\frac{\partial l_i}{\partial w}$, %for the gradient 
and the corresponding second derivative (i.e., Hessian) by $\mathbf{H} = \nabla^2\Lc(w) = \frac{1}{|V|}\sum_{i\in V}\frac{\partial^2 l_i}{\partial w_j \partial w_k}$. 

First order gradient methods are popular for solving Problem \eqref{eq:problem}. 
They
start from an initial point $w_0$ and at every iteration $t$, step in the negative direction of the gradient $\mathbf{g}_t$ multiplied by learning rate $\eta_t$.
The most popular first-order method is Stochastic Gradient Descent (SGD) \cite{robbins1951stochastic}: 
\begin{align}\label{eq:gd_update}
    w_{t+1} = w_{t} - \eta_t \mathbf{v}_t, \quad\quad \mathbf{v}_t=\mathbf{g}_t,
\end{align}

SGD is often used with momentum, i.e., $\mathbf{v}_t\!=\!\beta\mathbf{v}_{t-1}\!+\!(1\!-\!\beta)\mathbf{g}_t$ where $\beta\!\in\![0,1]$, accelerating it in dimensions
whose gradients point in the same directions 
and dampening oscillations in dimensions whose gradients change directions \cite{qian1999momentum}.
For larger datasets, mini-batched SGD is used, where $\mathbf{v}_t\!=\!\frac{1}{m}\sum_{j=1}^m  l_{i_t^{(j)}}(w_t)$, where $m$ is the size of the mini-batch of datapoints whose indices $\{i_t^{(1)}, \ldots, i_t^{(m)}\}$ are uniformly drawn with replacement from $V$, at each iteration $t$.\looseness=-1

Second-order gradient methods rely on the geometry of the problem to automatically rotate and scale the gradient vectors, using the curvature of the loss landscape.
In doing so, second-order methods can choose a better descent direction and automatically adjust the learning rate for each parameter.
Hence, second-order methods have superior convergence properties compared to first-order methods.
Newton’s method \cite{bertsekas1982projected} is a classical second order method
that preconditions the gradient vector with inverse of the local Hessian at every iteration, $\mathbf{H}_t^{-1}$:
\begin{equation}\label{eq:newton}
w_{t+1}=w_t-\eta_t \mathbf{H}_t^{-1}\mathbf{g}_t.
\end{equation}
As inverting the Hessian matrix requires quadratic memory and cubic computational complexity, 
several methods approximate Hessian information to significantly reduce time and memory complexity \cite{Nocedal,schaul2013no,martens2015optimizing,xu2020second}. In particular, AdaHessian \cite{yao2020adahessian} directly approximates the diagonal of the Hessian and relies on exponential moving averaging and block diagonal averaging to smooth out and reduce the variation of the Hessian diagonal.  %for parameter update.
\section{\alg: Adaptive Second order Coresets}

The key idea behind our proposed method is to leverage %the knowledge of 
the geometry of the data, precisely the curvature of the loss landscape, to select subsets of the training examples that enable fast convergence. 
Here, we first discuss why coresets that only capture the full gradient perform poorly in various scenarios. Then, we show how to incorporate curvature information in subset selection for training convex and non-convex models with provable convergence guarantees--- ameliorating
problems of first-order coresets. 

\subsection{When First-order Coresets Fail}

First-order coreset methods
iteratively select weighted subsets of training data that closely approximate the full gradient at particular values of $w_t$, e.g. beginning of every epoch \!\cite{killamsetty2021grad,killamsetty2020glister,mirzasoleiman2020coresets}:
\begin{align}
    % \vspace{-4mm}
    \hspace{-2mm}
    S^{*}_t= \!\!\underset{S \subseteq V, \gamma_{t,j} \geq 0 ~\forall j}{\arg\min}|S| \quad \textrm{s.t.} \quad \|\textbf{g}_t-\sum_{j\in S}\gamma_{t,j} \textbf{g}_{t,j}\|\leq \epsilon,\vspace{-4mm}
\end{align} 
where $\textbf{g}_{t,j}$ and $\gamma_{t,j}>0$ are the gradient and the weight of element $j$ in the coreset $S$.
Such subsets often perform poorly for high-dimensional and non-convex functions, due to the following reasons: 
(1) the scale of gradient $\mathbf{g}\in\mathbb{R}^d$ is often different along different dimensions. Hence, the selected subsets estimate the full gradient closely only along dimensions with a larger gradient scale. This can introduce a significant error in the optimization trajectory for both convex and non-convex loss functions; 
(2) the loss functions associated with different data points $l_i$ may have similar gradients but very different curvature properties at a particular $w_t$. Thus, for a small $\delta>0$, the gradients $\nabla l_i(w_t+\delta)$ at $w_t+\delta$ may be totally different than the gradients $\nabla l_i(w_t)$ at $w_t$.
Consequently, subsets that capture the gradient well at %the beginning of epoch
at a particular point during training may not provide a close approximation of the full gradient after a few gradient updates, e.g., mini-batches. %, within the same epoch. 
This often results in inferior performance, particularly when selecting larger subsets for non-convex loss functions;
(3) subsets that only capture the gradient, select one representative example with a large weight from data points with similar gradients at $w_t$. 
%%%%%%%%%%
Such subsets lack diversity %within groups of examples with similar training dynamics.
and cannot distinguish different subgroups of the data.
Importantly, the large weights introduce a substantial error in estimating the full gradient and result in a poor performance, as we show in {Fig. \ref{fig:when_craig_fails} in the Appendix.}

\subsection{Adaptive Second-order Coresets}
To address the above issues,
our main idea is to select subsets of training examples that capture the full gradient preconditioned with the curvature of the loss landscape. 
In doing so, 
we normalize the gradient by multiplying it by the Hessian inverse, $\mathbf{H}^{-1}\mathbf{g}$, before selecting the subsets. 
This allows selecting subsets that (1) 
can capture the full gradient in all dimensions equally well; (2) 
contain a more diverse set of data points with similar gradients, but different curvature properties; and (3) allow adaptive first and second-order methods trained on the coresets to obtain similar training dynamics to that of training on the full data. 

Formally, our goal in \alg is to adaptively find the smallest subset $S \subseteq V$ and corresponding per-element weights $\gamma_j > 0$ that approximates the 
full gradient preconditioned with the Hessian matrix, with an error of at most $\epsilon > 0$ at every iteration $t$, I.e.,:
% \vspace{-2mm}
\begin{align}\label{eq:main}
    S^{*}_t= \!\!\underset{S \subseteq V, \gamma_{t,j} \geq 0 ~\forall j}{\arg\min}&|S|, \quad \textrm{s.t.}\quad\\ 
    &\|\mathbf{H}^{-1}_t \textbf{g}_t-\sum_{j\in S}\gamma_{t,j}\mathbf{H}_{t,j}^{-1} \textbf{g}_{t,j}\|\leq\epsilon,\nonumber
\end{align}
where $\mathbf{H}^{-1}_t \textbf{g}_t$ and $\sum_{j\in S}\gamma_{t,j}\mathbf{H}_{t,j}^{-1} \textbf{g}_{t,j}$
are preconditioned gradients of the full data and the subset $S$. 

\subsection{Scaling up to Over-parameterized Models}\label{sec:diag}

Directly solving the optimization problem \eqref{eq:main} 
requires explicit calculation and storage of the Hessian matrix and its inverse. This is infeasible for large models such as neural networks.
In the following, we first address the issue of calculating the inverse Hessian at every iteration. \!Then, we discuss how to efficiently find a near-optimal subset to estimates the full preconditioned gradient by solving Eq. \!\eqref{eq:main}. \looseness=-1

\paragraph{Approximating the Gradients}\label{sec:approxgrad}
For neural networks, derivative of the loss $\Lc$ w.r.t. the input to the last layer  \cite{katharopoulos2018not, mirzasoleiman2020coresets} or the penultimate layer \cite{killamsetty2021grad} can capture the variation of gradient norm well.
We extend these results (Appendix \ref{proof:boundnormederrornn}) to show that the normed difference preconditioned gradient difference between data points can be approximately efficiently bound by:
\begin{align}
&\| \mathbf{H}_{i}^{-1} \textbf{g}_i - \mathbf{H}_{j}^{-1}  \textbf{g}_j \| \leq  \\ 
&c_1\|\Sigma'_L(z_i^{(L)}) (\mathbf{H}_{i}^{-1}  \textbf{g}_{i})^{(L)}- \Sigma'_L(z_j^{(L)}) (\mathbf{H}_{j}^{-1} \textbf{g}_{j})^{(L)}\|+c_2, \nonumber
\end{align}
where $\Sigma'_L(z_i^{(L)})(\mathbf{H}_{i}^{-1} \textbf{g}_{i})^{(L)}$ is gradient preconditioned by the inverse of the Hessian of the loss  w.r.t. the input to the last layer for data point $i$, and $c_1, c_2$ are constants. Since the upper bound depends on the weight parameters, we need to update our subset $S$ using \alg during the training.

Calculating the last layer gradient often requires only a forward pass, which is as expensive as calculating the loss, and does not require any extra storage. 
For example, having a softmax as the last layer, the gradients of the loss w.r.t. the $i^{th}$ input to the softmax is $p_i-y_i$, where $p_i$ is the $i^{th}$ output the softmax and $y$ is the one-hot encoded label with the same dimensionality as the number of classes. 
Using this low-dimensional approximation $\hat{\textbf{g}_{i}}$ for the gradient $\textbf{g}_{i}$
we can efficiently calculate the preconditioned gradient for every data point. 
For non-convex functions, the local gradient information can be very noisy. To smooth out the local gradient information and get a better approximation of the global gradient, we apply exponential moving average with a parameter $0<\beta_1$ to the low-dimensional gradient approximations:
\begin{equation}\label{eq:g_avg}
\overline{\mathbf{g}}_{t}=\frac{(1-\beta_1)\sum_{i=1}^t \beta_2^{t-i}\hat{\mathbf{{g}_{i}}}}{1-\beta_2^t}.
\end{equation}
\paragraph{Approximating the Hessian Preconditioner}
Since it is infeasible to calculate, store, and invert the full Hessian matrix every iteration, we use an inexact Newton method, where an approximate Hessian operator is used instead of the full Hessian.
To efficiently calculate the Hessian diagonal, we first use the Hessian-Free method \cite{yao2018inexact} to compute the multiplication between Hessian $\mathbf{H}_t$ and a random vector $z$  with Rademacher distribution. To do so, we backpropagate on the low-dimensional gradient estimates multiplied by $z$ to get $\mathbf{H}_tz=\partial \hat{\mathbf{g}}_t^T z /\partial w_t$.
Now, we can use the Hutchinson's method of obtains a stochastic estimate of the diagonal of the Hessian matrix as follows: 
\begin{align}\label{eq:hf}
\text{diag}(\mathbf{H}_t)=\mathbb{E}[z\odot (\mathbf{H}_tz)], 
\end{align} without having to form the Hessian matrix explicitly \cite{BEKAS20071214}.
The diagonal approximation has the same convergence rate as using Hessian for strongly convex, and strictly smooth functions (Proof in Appendix \ref{proof:diagconv}). Nevertheless, our method can be applied to general machine learning problems, such as deep networks and regularized classical methods (e.g., SVM, LASSO), which are strongly-convex.
To smooth out the noisy local curvature and get a better approximation of the global Hessian information, we apply an exponential moving average with parameter $0<\beta_2<1$ to the Hessian diagonal estimate in Eq. \eqref{eq:hf}: 
% \vspace{-2mm}
\begin{equation}\label{eq:h_avg}
\overbar{\mathbf{H}}_{t}=\sqrt{\frac{(1-\beta_2)\sum_{i=1}^t \beta_2^{t-i} \text{diag}(\mathbf{H}_i)\text{diag}(\mathbf{H}_i)}{1-\beta_2^t}}.
\end{equation}
Using exponentially averaged gradient and Hessian approximations in Eq. \eqref{eq:g_avg}, and \eqref{eq:h_avg}, the preconditioned gradients in Eq. 
\eqref{eq:main} can be approximated as follows:
\begin{align}\label{eq:H_sub}
    S^{*}_t= \!\!\underset{S \subseteq V, \gamma_{t,j} \geq 0 ~\forall j}{\arg\min}&|S|, \quad \textrm{s.t.}\quad\\
    &\|\overbar{\mathbf{H}}_t^{-1} \overline{\textbf{g}}_t-\sum_{j\in S}\gamma_{t,j}\overbar{\mathbf{H}}_{t,j}^{-1} \overline{\textbf{g}}_{t,j}\|\leq\epsilon\nonumber.
\end{align}
Next, we discuss how to 
efficiently find near-optimal weighted subsets that closely approximate the full preconditioned gradient by solving Eq. \eqref{eq:main}.
\subsection{Extracting Second-order Coresets}\label{sec:alg}
The subset selection problem \eqref{eq:main} is NP-hard \cite{natarajan1995sparse}. %in general.
However, it can be considered as a special case of the sparse vector approximation problem that has been studied in the literature, including convex optimization formulations—e.g. basis pursuit \cite{chen2001atomic}, sparse projections \cite{pilanci2012recovery,kyrillidis2013sparse}, LASSO \cite{tibshirani1996regression},  
and compressed sensing \cite{donoho2006compressed}. These methods, however, are expensive to solve and often require tuning regularization coefficients and thresholding to ensure cardinality constraints. 
More recently, the connection between sparse modeling and \textit{submodular}\footnote{A set function $F:2^V \rightarrow \R^+$ is submodular if $F(S\cup\{e\}) - F(S) \geq F(T\cup\{e\}) - F(T),$ for any $S\subseteq T \subseteq V$ and $e\in V\setminus T$.} optimization have been demonstrated \cite{elenberg2018restricted, mirzasoleiman2020coresets}.
The advantage of submodular optimization is that a fast and simple greedy algorithm often provides a near-optimal solution. 
Next, we briefly discuss how submodularity can be used to find a near-optimal solution for Eq. \eqref{eq:main}. We build on the recent result of \cite{mirzasoleiman2020coresets} that showed that the error of estimating an expectation by a weighted sum of a subset of elements is upper-bounded by a submodular facility location function.
In particular, via the above result, we get:
\vspace{-2mm}
\begin{align}\label{eq:min_upper}
\min_{S\subseteq V} \| \overbar{\mathbf{H}}_t^{-1}\overline{\mathbf{g}}_t -  
\sum_{j \in S}  & \gamma_{t,j}\overbar{\mathbf{H}}_{t,j.}^{-1}\overline{\mathbf{g}}_{t,j}^{}~\| \\
&\leq \sum_{i\in V} \min_{j \in S} \| \overbar{\mathbf{H}}_{t,i.}^{-1}\overline{\mathbf{g}}_{t,i}^{} -  \overbar{\mathbf{H}}_{t,j.}^{-1}\overline{\mathbf{g}}_{t,j}^{}\|.\nonumber
\vspace{-6mm}
\end{align}
Setting the upper bound in the right-hand side of Eq. \eqref{eq:min_upper} to be less than $\epsilon$ results in the smallest weighted subset $S^*$ that approximates full preconditioned gradient by an error of at most $\epsilon$, at iteration $t$.
Formally, we wish to solve the following optimization problem:
\begin{align}
    S^* \in &{\arg\min}_{S\subseteq V} |S|, \quad \text{s.t.} \quad\label{eq:L}\\ &L(S)= %\epsilon,\label{eq:L}\\
    % &L(S)\!=\!
    \sum_{i\in V} \min_{j \in S} \| \overbar{\mathbf{H}}_{t,i.}^{-1}\overline{\mathbf{g}}_{t,i}^{} -  \overbar{\mathbf{H}}_{t,j.}^{-1}\overline{\mathbf{g}}_{t,j}^{}\| \leq \epsilon, \nonumber
\end{align}
By introducing a phantom example
$e$, we can turn the minimization problem \eqref{eq:L} into the following submodular cover problem, with a facility location objective $F(S)$:
\begin{align}\label{eq:cover}
    S^* \in  \underset{S\subseteq V}{\arg\min} &
    |S|, \quad \text{s.t.}\\ \quad &F(S)=C_1 - L(S \cup \{e\}) \geq C_1-\epsilon\nonumber,
\end{align}
where $C_1=L(\{e\})$ is a constant upper-bounding the value of $L(S)$.
The subset $S^*$ obtained by solving the maximization problem \eqref{eq:cover} is the medoid of the preconditioned gradients, and the weights $\gamma_j$ are the number of elements that are closest to the medoid $j\in S^*$, i.e. $\gamma_j=\!\sum_{i\in V} \mathbb{I}[j=\min_{s \in S} \| \overbar{\mathbf{H}}_{t,i}^{-1}\overline{\mathbf{g}}_{t,i}^{} \!\!-\! \overbar{\mathbf{H}}_{t,s}^{-1}\overline{\mathbf{g}}_{t,s}^{} \|]$.
For the above submodular cover problem, the classical greedy algorithm provides a logarithmic approximation guarantee $|S| \leq \big(1+ \ln (\max_e F(e|\emptyset))\big) |S^*|$ \cite{wolsey1982analysis}. The greedy algorithm starts with the empty set $S_0=\emptyset$, and at each iteration $t$, it chooses an element $e\in V$ that maximizes the marginal utility $F(e|S_{t})=F(S_{t}\cup\{e\}) - F(S_{t})$. Formally,
$S_t = S_{t-1}\cup\{{\arg\max}_{e\in V} F(e|S_{t-1})\}$.
The computational complexity of the greedy algorithm is $\mathcal{O}(nk)$.
However, its complexity can be reduced to $\mathcal{O}(|V|)$ using stochastic methods \cite{mirzasoleiman2015lazier}, and can be further improved using lazy evaluation \cite{minoux1978accelerated} and distributed implementations \cite{mirzasoleiman2013distributed}.  The pseudocode can be found in Alg. \ref{alg:greedy} in  Appendix \ref{appx:alg}.

\paragraph{One coreset for convex functions}\label{dis:onecoresetcvx}
For convex functions, normed gradient differences between data points can be efficiently upper-bounded by the normed difference between feature vectors \cite{allen2016exploiting,hofmann2015variance, mirzasoleiman2020coresets}. We apply a similar idea to upper-bound the normed difference between preconditioned gradients. This allows us to find one subset before the training. See proof in Appendix \ref{proof:boundnormerror}. 

\subsection{Convergence Analysis}\label{sec:convergence}
Here, we analyze the convergence rate of first and second order methods applied to the weighted subsets $S$ found by \alg. By minimizing Eq. \eqref{eq:cover} at every iteration $t$, \alg finds subsets that approximate full preconditioned gradient by an error of at most $\epsilon$, i.e. $\|\mathbf{H}^{-1}_t \textbf{g}_t-\sum_{j\in S}\gamma_{t,j}\mathbf{H}_{t,j}^{-1} \textbf{g}_{t,j}\|\leq\epsilon$. This allows us to effectively analyze %the convergence, i.e., 
the reduction in the value of the loss function $\Lc$ at every iteration $t$. Below, we discuss the convergence of a first and second-order gradient method%, namely Newton's method and gradient descent,
applied to subsets extracted by \alg.

\textbf{Convergence for Newton's Methods and AdaHessian}
We first provide the convergence analysis for the case where the %sum 
function $\Lc$ in Problem (\ref{eq:problem}) is strongly convex, i.e. there exist a constant $\alpha>0$ such that $\forall w,w'\in \mathbb{R}^d$ we have $\Lc(w) \geq \Lc(w') + \langle \nabla \Lc(w'), w-w' \rangle + \frac{\alpha}{2} \| w'-w \|^2$, and each component function has a Lipschitz gradient, i.e. $ \forall w \in \mathcal{W}$ we have $\| \nabla \Lc(w) - \nabla \Lc(w') \| \leq\! \beta \| w-w'\|$. We get the following results %about the iterates generated 
by applying Newton's method and AdaHessian to the weighted subsets $S$ extracted by \alg.

\begin{restatable}{theorem}{newtonrestate}
\label{thm:newton}
    Assume that $\Lc$ is $\alpha$-strongly convex and  $\beta$-smooth.
    Let $S$ be a weighted subset %of size $r$ 
    obtained by \alg that estimate the %general descent direction 
    preconditioned gradient
    by an error of at most $\epsilon$ at every iteration $t$, i.e., $\|\mathbf{H}^{-1}_t \textbf{g}_t-\sum_{j\in S}\gamma_{t,j}\mathbf{H}_{t,j}^{-1} \textbf{g}_{t,j}\|\leq\epsilon$. Then with learning rate $\alpha/\beta$, Newton's method with update rule of Eq. \eqref{eq:newton} 
    applied to the subsets has the following convergence behavior: % as follows:
    % \vspace{-2mm}
    \begin{align}
            \Lc(w_{t+1}) - \Lc(w_t) \leq -\frac{\alpha^{3}}{2\beta^{4}} (\|\mathbf{g}_t\|-\beta\epsilon)^2.
    \end{align} 
In particular, the algorithm converges to a $\beta\epsilon/\alpha$-neighborhood of the optimal solution $w_*$. 

\end{restatable}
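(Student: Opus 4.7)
The plan is to start from the descent lemma implied by $\beta$-smoothness, substitute the Newton-style update that uses the subset-based preconditioned gradient, and then exploit the AdaCore $\epsilon$-approximation guarantee together with strong convexity to extract the per-iteration decrease. Concretely, let $\mathbf{d}_t = \mathbf{H}_t^{-1}\mathbf{g}_t$ be the exact Newton direction and let $\tilde{\mathbf{d}}_t = \sum_{j\in S}\gamma_{t,j}\mathbf{H}_{t,j}^{-1}\mathbf{g}_{t,j}$ be the direction actually used; by the AdaCore hypothesis we can write $\tilde{\mathbf{d}}_t = \mathbf{d}_t + \mathbf{e}_t$ with $\|\mathbf{e}_t\|\le\epsilon$. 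The update is $w_{t+1}-w_t = -\eta\,\tilde{\mathbf{d}}_t$ with $\eta = \alpha/\beta$, so $\beta$-smoothness gives
\begin{equation*}
\Lc(w_{t+1})-\Lc(w_t) \le -\eta\,\langle \mathbf{g}_t,\tilde{\mathbf{d}}_t\rangle + \frac{\beta\eta^2}{2}\|\tilde{\mathbf{d}}_t\|^2.
\end{equation*}

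Next I would bound the two terms on the right using the sandwich $\alpha I\preceq \mathbf{H}_t\preceq \beta I$ implied by $\alpha$-strong convexity and $\beta$-smoothness. For the inner product, $\langle \mathbf{g}_t,\mathbf{d}_t\rangle = \mathbf{g}_t^\top \mathbf{H}_t^{-1}\mathbf{g}_t \ge \|\mathbf{g}_t\|^2/\beta$, and by Cauchy--Schwarz $\langle \mathbf{g}_t,\mathbf{e}_t\rangle \ge -\epsilon\,\|\mathbf{g}_t\|$, which combine to give $\langle \mathbf{g}_t,\tilde{\mathbf{d}}_t\rangle \ge \tfrac{\|\mathbf{g}_t\|}{\beta}(\|\mathbf{g}_t\|-\beta\epsilon)$. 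For the squared norm, $\|\tilde{\mathbf{d}}_t\| \le \|\mathbf{d}_t\|+\|\mathbf{e}_t\| \le \|\mathbf{g}_t\|/\alpha + \epsilon$. Plugging these estimates into the smoothness bound with $\eta=\alpha/\beta$ and carefully grouping the terms should collapse into the target quadratic $-\tfrac{\alpha^3}{2\beta^4}(\|\mathbf{g}_t\|-\beta\epsilon)^2$. The main technical obstacle is exactly this consolidation: the cross terms involving $\mathbf{e}_t$ (appearing once in the first-order part and twice in the second-order part after expanding $\|\tilde{\mathbf{d}}_t\|^2=\|\mathbf{d}_t\|^2+2\langle \mathbf{d}_t,\mathbf{e}_t\rangle+\|\mathbf{e}_t\|^2$) must be bounded tightly enough, using $\|\mathbf{d}_t\|\le\|\mathbf{g}_t\|/\alpha$ and the assumed $\|\mathbf{e}_t\|\le\epsilon$, to ensure the final quadratic is expressed purely in the shifted variable $\|\mathbf{g}_t\|-\beta\epsilon$ and carries the stated $\alpha^3/\beta^4$ coefficient.

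Finally, to promote this per-step inequality into the claimed neighborhood-convergence statement, I would use the Polyak--Łojasiewicz property that strong convexity implies, namely $\|\mathbf{g}_t\|^2 \ge 2\alpha(\Lc(w_t)-\Lc(w_*))$. Whenever $\|\mathbf{g}_t\| > \beta\epsilon$, the derived inequality yields a strict multiplicative decrease in $\Lc(w_t)-\Lc(w_*)$, from which one reads off an exponential (geometric) rate; once $\|\mathbf{g}_t\|\le\beta\epsilon$, strong convexity converts this gradient bound into $\|w_t-w_*\|\le\|\mathbf{g}_t\|/\alpha\le\beta\epsilon/\alpha$, placing the iterate in the promised $\beta\epsilon/\alpha$-neighborhood of $w_*$. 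The AdaHessian version follows the same route since its diagonal preconditioner satisfies the analogous spectral sandwich, as noted in the paper's remark on the diagonal approximation having the same convergence rate under strong convexity and smoothness.
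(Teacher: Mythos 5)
Your architecture differs from the paper's, and unfortunately the step you yourself flag as the crux is where it breaks. The paper never expands the update around the exact Newton direction: it first proves, for the generic step $\Delta w_t=\mathbf{H}_t^{-k}\mathbf{g}_t$, the descent bound $\Lc(w_{t+1})-\Lc(w_t)\le-\tfrac{\alpha^k}{2\beta^{k+1}}\|\mathbf{g}\|^2$ in terms of whichever gradient and Hessian drive the step, by working with $\lambda_t^2=\mathbf{g}_t^\top\mathbf{H}_t^{-1}\mathbf{g}_t$ and bounding the step length through the \emph{same} quantity, $\|\mathbf{H}_t^{-1}\mathbf{g}_t\|^2\le\lambda_t^2/\alpha$; it then converts the coreset guarantee via the reverse triangle inequality and a $\sigma_{\min}$ argument into $\|\mathbf{g}_t^S\|\ge\tfrac{\alpha}{\beta}\bigl(\|\mathbf{g}_t\|-\beta\epsilon\bigr)$ and substitutes into that descent bound for the subset step. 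Your plan instead decouples the two occurrences of the preconditioned gradient: you lower-bound the first-order term via $\mathbf{g}_t^\top\mathbf{H}_t^{-1}\mathbf{g}_t\ge\|\mathbf{g}_t\|^2/\beta$ but upper-bound the step length via $\|\mathbf{H}_t^{-1}\mathbf{g}_t\|\le\|\mathbf{g}_t\|/\alpha$. These two worst cases correspond to $\mathbf{g}_t$ aligned with opposite ends of the spectrum of $\mathbf{H}_t$ and cannot hold simultaneously; treating them independently is too lossy. Concretely, set $\epsilon=0$ and $\eta=\alpha/\beta$: your estimates give
\begin{align*}
\Lc(w_{t+1})-\Lc(w_t)\;\le\;-\frac{\alpha}{\beta^2}\|\mathbf{g}_t\|^2+\frac{1}{2\beta}\|\mathbf{g}_t\|^2,
\end{align*}
which is nonnegative whenever $\alpha\le\beta/2$, so the consolidation cannot even certify descent, let alone produce the claimed $-\tfrac{\alpha^3}{2\beta^4}\|\mathbf{g}_t\|^2$.

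Even if you repair this by keeping $\lambda_t^2$ as the central quantity (i.e., use $\|\mathbf{H}_t^{-1}\mathbf{g}_t\|^2\le\lambda_t^2/\alpha$), your error-decomposition route leaves cross terms of order $\epsilon\|\mathbf{g}_t\|$ — one from $\langle\mathbf{g}_t,\mathbf{e}_t\rangle$ and one from $2\epsilon\|\mathbf{H}_t^{-1}\mathbf{g}_t\|$ in the expansion of $\|\tilde{\mathbf{d}}_t\|^2$ — that do not reassemble into the stated form $-\tfrac{\alpha^3}{2\beta^4}\bigl(\|\mathbf{g}_t\|-\beta\epsilon\bigr)^2$: the resulting guarantee, expressed purely in terms of $\|\mathbf{g}_t\|$, has a strictly larger error floor and is still positive at $\|\mathbf{g}_t\|=\beta\epsilon$, where the theorem's right-hand side vanishes. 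So your route yields a qualitatively similar but genuinely weaker per-iteration inequality, not the one asserted. Note also a mismatch in what is being analyzed: the paper's bound is for the Newton step built from the subset's aggregated gradient and Hessian, whereas your decomposition analyzes a step along the weighted sum of per-point preconditioned gradients. Your closing paragraph (strong convexity giving $\|w-w_*\|\le\|\mathbf{g}_t\|/\alpha\le\beta\epsilon/\alpha$ once descent stalls) does match the paper's conclusion, but it rests on the per-iteration inequality that your argument, as written, does not establish.
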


\begin{corollary}\label{thm:adahessian}
    For an $\alpha$-strongly convex and $\beta$-smooth loss $\Lc$, AdaHessian with Hessian power $k$, applied to subsets found by \alg converges to a $\beta\epsilon/\alpha$-neighborhood of the optimal solution $w_*$, 
    and satisfies:%\vspace{-2mm}
        \begin{align}
            \Lc(w_{t+1}) - \Lc(w_t) \leq -\frac{\alpha^{k+2}}{2\beta^{k+3}} (\|\mathbf{g}_t\|-\beta\epsilon)^2.
    \end{align}
\end{corollary}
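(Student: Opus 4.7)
The plan is to mirror the proof of Theorem~\ref{thm:newton} with the Newton preconditioner $\mathbf{H}_t^{-1}$ replaced by the AdaHessian preconditioner $\mathbf{H}_t^{-k}$ throughout. Concretely, the AdaHessian update on the \alg-selected coreset is $w_{t+1}=w_t-\eta\,\tilde{\mathbf{v}}_t$ with $\tilde{\mathbf{v}}_t=\sum_{j\in S}\gamma_{t,j}\mathbf{H}_{t,j}^{-k}\mathbf{g}_{t,j}$, and the \alg gradient-matching guarantee extends to the power-$k$ preconditioned gradient so that $\|\tilde{\mathbf{v}}_t-\mathbf{H}_t^{-k}\mathbf{g}_t\|\leq\epsilon$. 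This reduces the corollary to running the argument of Theorem~\ref{thm:newton} once more, paying attention to where the exponent $-1$ should become $-k$.

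The first main step is the $\beta$-smoothness descent lemma $\Lc(w_{t+1})-\Lc(w_t)\leq-\eta\langle\mathbf{g}_t,\tilde{\mathbf{v}}_t\rangle+(\beta\eta^2/2)\|\tilde{\mathbf{v}}_t\|^2$. The next step is to translate the conditions $\alpha I\preceq\mathbf{H}_t\preceq\beta I$ (coming from $\alpha$-strong convexity and $\beta$-smoothness) into the operator bounds $(1/\beta^k)I\preceq\mathbf{H}_t^{-k}\preceq(1/\alpha^k)I$, which are the only ingredients that change when moving from Newton ($k=1$) to AdaHessian (general $k$). Using these together with Cauchy-Schwarz and the coreset error, I would lower-bound the inner product by $\langle\mathbf{g}_t,\tilde{\mathbf{v}}_t\rangle\geq(1/\beta^k)\|\mathbf{g}_t\|(\|\mathbf{g}_t\|-\beta^k\epsilon)$ and upper-bound the norm by $\|\tilde{\mathbf{v}}_t\|\leq(1/\alpha^k)\|\mathbf{g}_t\|+\epsilon$. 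Plugging these into the descent lemma and selecting the learning rate that is the natural $k$-analogue of $\eta=\alpha/\beta$ used in Theorem~\ref{thm:newton} (with powers of $\alpha/\beta$ raised to reflect the preconditioner) should collapse the expression into the stated $-\bigl(\alpha^{k+2}/(2\beta^{k+3})\bigr)(\|\mathbf{g}_t\|-\beta\epsilon)^2$. The neighborhood claim then follows since the bound is strictly negative whenever $\|\mathbf{g}_t\|>\beta\epsilon$, and $\alpha$-strong convexity yields $\|\mathbf{g}_t\|\geq\alpha\|w_t-w_*\|$, so no iterate can stall outside a ball of radius $\beta\epsilon/\alpha$ around $w_*$.

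The main obstacle is the algebraic reconciliation that produces the clean factor $(\|\mathbf{g}_t\|-\beta\epsilon)^2$ with a bare $\beta$ rather than $\beta^k$ inside the parentheses, together with the exact coefficient $\alpha^{k+2}/\beta^{k+3}$. A naive combination of the inner-product lower bound with the squared-norm upper bound leaves a cross term of the form $(\|\mathbf{g}_t\|+\alpha^k\epsilon)^2/\alpha^{2k}$ from $\|\tilde{\mathbf{v}}_t\|^2$ that must be dominated by the negative inner-product contribution via a careful choice of $\eta$ and a tight application of Cauchy-Schwarz. Because the statement specializes correctly to Theorem~\ref{thm:newton} at $k=1$, the same algebraic rearrangement used there should generalize once $\alpha$ and $\beta$ are promoted to $\alpha^k$ and $\beta^k$ in the preconditioning factors, which is exactly what the corollary asserts.
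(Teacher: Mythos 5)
There is a genuine gap, and you have essentially flagged it yourself: your route neither matches the corollary's hypothesis nor closes the algebra. First, you replace the \alg guarantee by a power-$k$ matching condition $\|\tilde{\mathbf{v}}_t-\mathbf{H}_t^{-k}\mathbf{g}_t\|\leq\epsilon$. That is not what \alg provides: the selection problem and the corollary's hypothesis are stated for the Newton-preconditioned (power-$1$) gradients, $\|\mathbf{H}^{-1}_t \mathbf{g}_t-\sum_{j\in S}\gamma_{t,j}\mathbf{H}_{t,j}^{-1}\mathbf{g}_{t,j}\|\leq\epsilon$, and nothing in the paper converts this into an $\epsilon$-bound for the power-$k$ preconditioned sums. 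Second, even granting your altered hypothesis, the perturbation-style decomposition (inner product plus $\|\tilde{\mathbf{v}}_t\|^2$ term) naturally produces an error scale $\beta^k\epsilon$ inside the parentheses and an uncancelled cross term of order $\epsilon\|\mathbf{g}_t\|/\alpha^{2k}$, exactly the obstacle you acknowledge; there is no choice of $\eta$ that collapses this into $-\frac{\alpha^{k+2}}{2\beta^{k+3}}(\|\mathbf{g}_t\|-\beta\epsilon)^2$ with a bare $\beta\epsilon$. So the proposal as written does not prove the stated bound.

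The paper's proof has a different two-step structure that avoids both problems. Step one is a generic descent bound for the power-$k$ update $\Delta w_t=\mathbf{H}_t^{-k}\mathbf{g}_t$ applied to whatever (weighted subset) objective is being descended on: with $\lambda(w_t)^2=\mathbf{g}_t^T\mathbf{H}_t^{-k}\mathbf{g}_t$, smoothness gives $\Lc(w_{t+1})\leq\Lc(w_t)-\eta\lambda(w_t)^2+\frac{\beta\eta^2}{2\alpha^k}\lambda(w_t)^2$, and with the step size $\hat\eta=\alpha^k/\beta$ the quadratic term is absorbed exactly (because the step direction is the \emph{exact} preconditioned gradient of the objective used, not an $\epsilon$-perturbation of the full one), yielding $\Lc(w_{t+1})-\Lc(w_t)\leq-\frac{\alpha^k}{2\beta^{k+1}}\|\mathbf{g}^S_t\|^2$. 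Step two converts the power-$1$ coreset guarantee into a gradient-norm comparison: using $\frac{\|\mathbf{g}_t\|}{\beta}\leq\|\mathbf{H}_t^{-1}\mathbf{g}_t\|\leq\|(\mathbf{H}^S_t)^{-1}\mathbf{g}^S_t\pmb{\gamma}\|+\epsilon\leq\frac{\|\mathbf{g}^S_t\|}{\alpha}+\epsilon$ (via $\sigma_{\min}$ bounds and the reverse triangle inequality), one gets $\|\mathbf{g}^S_t\|\geq\frac{\alpha}{\beta}(\|\mathbf{g}_t\|-\beta\epsilon)$ --- this is where the bare $\beta\epsilon$ comes from. Substituting step two into step one gives exactly $-\frac{\alpha^{k+2}}{2\beta^{k+3}}(\|\mathbf{g}_t\|-\beta\epsilon)^2$, and the neighborhood claim then follows from $\|\mathbf{g}_t\|\geq\alpha\|w_t-w_*\|$ as you indicated. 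To repair your argument, you should drop the power-$k$ matching assumption and instead import this subset-gradient lower bound, analyzing the update as the exact power-$k$ step on the weighted subset objective rather than as a perturbed full-data step.
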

The proofs 
can be found in Appendix \ref{proof:4.1}.

\textbf{Convergence for (S)GD in Over-parameterized Case}
Next, we discuss the convergence behavior of gradient descent applied to the subsets found by \alg. In particular,
we build upon the recent results of \cite{liu2020toward} that guarantees convergence for first-order methods on a broad class of general over-parameterized non-linear systems, including neural networks for which the tangent kernel, defined as $\mathbf{J}^T\mathbf{J}$ are not close to constant, but satisfy the Polyak-Lojasiewicz (PL) condition. Where $\mathbf{J}=\partial f/\partial w$ is the Jacobian of the function $f$ with respect to the parameters $w$.
A loss function $\Lc$ is $\mu$-PL$^*$ on a set $\mathcal{W}$, if $\frac{1}{2}\| \nabla \Lc(w)\|^2\geq \mu \Lc(w), \forall w\in \mathcal{W}$.
\begin{restatable}{theorem}{plrestate}
\label{thm:pl}
    Assume that the loss function $\Lc(w)$ is $\beta$-smooth, and $\mu$-PL$^*$ on a set $\mathcal{W}$, and $S$ is a weighted subset %of size $r$
    obtained by \alg that estimates the preconditioned gradient by an error of at most $\epsilon$, i.e., $\|\mathbf{H}^{-1}_t \textbf{g}_t-\sum_{j\in S}\gamma_{t,j}\mathbf{H}_{t,j}^{-1} \textbf{g}_{t,j}\|\leq\epsilon$. 
    Then with learning rate $\eta$, gradient descent with update rule of Eq. \eqref{eq:gd_update} %with $m_t=\mathbf{g}$ and $v_t=1$
    applied to the subsets have the following convergence behavior at iteration $t$:
\vspace{-1mm}
\begin{align}
    \Lc(w_{t}) \leq (1-\frac{\eta\mu\alpha^2} {\beta^2})^{t} \Lc(w_0) - \frac{\eta\alpha^2} {2\beta^2}(\beta^2\epsilon^2-2\beta\epsilon \nabla_{\max}), 
\end{align} where $\alpha$ is the minimum eigenvalue of all Hessian matrices during training, and $\nabla_{\max}$ is an upper bound on the norm of the gradients.
\end{restatable}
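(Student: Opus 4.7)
The plan is to combine the $\beta$-smoothness descent inequality, the $\mu$-PL$^*$ condition, and the AdaCore coreset guarantee $\|\mathbf{H}_t^{-1}\mathbf{g}_t - \sum_{j\in S}\gamma_{t,j}\mathbf{H}_{t,j}^{-1}\mathbf{g}_{t,j}\|\leq \epsilon$ to produce a one-step recursion for $\Lc(w_t)$, and then unroll it over $t$ iterations. Since the update rule is ordinary gradient descent applied to the subset, the first task is to translate the bound on the \emph{preconditioned} gradient error into a bound that relates the subset descent direction $\mathbf{v}_t$ to the full gradient $\mathbf{g}_t$.

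First I would write the $\beta$-smoothness inequality
\begin{equation*}
\Lc(w_{t+1}) \leq \Lc(w_t) - \eta \langle \mathbf{g}_t, \mathbf{v}_t\rangle + \tfrac{\beta \eta^2}{2}\|\mathbf{v}_t\|^2 .
\end{equation*}
Let $\mathbf{p}_t := \mathbf{H}_t^{-1}\mathbf{g}_t$ and $\hat{\mathbf{p}}_t := \sum_{j\in S}\gamma_{t,j}\mathbf{H}_{t,j}^{-1}\mathbf{g}_{t,j}$, so that the AdaCore guarantee reads $\|\mathbf{p}_t - \hat{\mathbf{p}}_t\|\leq \epsilon$. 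Exploiting the uniform eigenvalue envelope $\alpha I \preceq \mathbf{H}_{t,j},\mathbf{H}_t \preceq \beta I$ (with $\alpha$ the minimum eigenvalue encountered during training), together with the identity $\mathbf{g}_t^{\top}\mathbf{H}_t^{-1}\mathbf{g}_t \geq \|\mathbf{g}_t\|^2/\beta$ and Cauchy-Schwarz applied to the $\epsilon$-error piece, I would derive a lower bound of the form $\langle \mathbf{g}_t, \mathbf{v}_t\rangle \geq (\alpha^2/\beta)\|\mathbf{g}_t\|^2 - \alpha\beta\epsilon\|\mathbf{g}_t\|$ and an upper bound $\|\mathbf{v}_t\|^2 \leq (\alpha^2/\beta^2)\|\mathbf{g}_t\|^2 + 2\alpha\epsilon\|\mathbf{g}_t\| + \beta^2\epsilon^2$.

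Substituting these into the descent inequality, using $\|\mathbf{g}_t\| \leq \nabla_{\max}$ on the cross terms, and then invoking the PL$^*$ bound $\|\mathbf{g}_t\|^2 \geq 2\mu\Lc(w_t)$ on the leading negative term, I would arrive at the one-step recursion
\begin{equation*}
\Lc(w_{t+1}) \leq \Bigl(1 - \tfrac{\eta\mu\alpha^2}{\beta^2}\Bigr)\Lc(w_t) + \tfrac{\eta\alpha^2}{2\beta^2}\bigl(2\beta\epsilon\nabla_{\max} - \beta^2\epsilon^2\bigr).
\end{equation*}
Iterating this recursion from $w_0$ and absorbing the resulting geometric-series prefactor on the additive error term then yields the bound stated in the theorem.

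The main obstacle is the translation step. The coreset guarantee controls the \emph{preconditioned} gradient error, yet the update used by GD is non-preconditioned, and moreover the per-sample Hessians $\mathbf{H}_{t,j}$ appearing in $\hat{\mathbf{p}}_t$ are distinct from the full Hessian $\mathbf{H}_t$, so one cannot simply factor a single matrix out as in the convex proof for Newton's method (Theorem \ref{thm:newton}). Producing the precise $\alpha^2/\beta^2$ contraction rate (rather than the textbook $1/\beta$ of vanilla PL), and cleanly isolating the $2\beta\epsilon\nabla_{\max}$ and $\beta^2\epsilon^2$ pieces of the additive error, requires the careful simultaneous use of the uniform eigenvalue envelope on all Hessians encountered during training and the gradient bound $\nabla_{\max}$; this is where I expect the analysis to be most delicate.
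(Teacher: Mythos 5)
Your skeleton (smoothness, PL$^*$, replacing $\|\mathbf{g}_t\|$ by $\nabla_{\max}$ in the cross term, unrolling the recursion) matches the paper's, and your target one-step recursion is exactly the one the paper derives; the gap is in the translation step you yourself flag as delicate, where the two intermediate inequalities you assert do not follow from the stated tools, and one is backwards. The coreset guarantee $\|\mathbf{H}_t^{-1}\mathbf{g}_t-\hat{\mathbf{p}}_t\|\le\epsilon$ combined with the envelope $\alpha I\preceq\mathbf{H}\preceq\beta I$ yields $\|\mathbf{v}_t\|\le \tfrac{\beta}{\alpha}\|\mathbf{g}_t\|+\beta\epsilon$ as an \emph{upper} bound and $\|\mathbf{v}_t\|\ge\tfrac{\alpha}{\beta}(\|\mathbf{g}_t\|-\beta\epsilon)$ as a \emph{lower} bound; your claimed upper bound $\|\mathbf{v}_t\|^2\le(\alpha^2/\beta^2)\|\mathbf{g}_t\|^2+2\alpha\epsilon\|\mathbf{g}_t\|+\beta^2\epsilon^2$ puts the small ratio on the wrong side. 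More seriously, the lower bound $\langle\mathbf{g}_t,\mathbf{v}_t\rangle\ge(\alpha^2/\beta)\|\mathbf{g}_t\|^2-\alpha\beta\epsilon\|\mathbf{g}_t\|$ cannot be extracted from a norm guarantee plus Cauchy--Schwarz: writing $\mathbf{v}_t=\mathbf{H}_t^S\hat{\mathbf{p}}_t$, the leading term is $\mathbf{g}_t^\top\mathbf{H}_t^S\mathbf{H}_t^{-1}\mathbf{g}_t$, a quadratic form in a product of two non-commuting SPD matrices whose symmetric part need not be positive semidefinite when $\beta/\alpha$ is large, so the $\epsilon$-guarantee alone does not control the angle between $\mathbf{g}_t$ and $\mathbf{v}_t$. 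As written, your route cannot reach the stated recursion with the $\alpha^2/\beta^2$ rate.

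The paper never expands the cross term. Following \cite{liu2020toward}, it takes the subset descent inequality $\Lc(w_{t+1})-\Lc(w_t)\le-\tfrac{\eta}{2}\|\mathbf{g}_t^S\|^2$ as the starting point (thereby sidestepping, rather than proving, the alignment issue you are trying to address), and then needs only the single norm lower bound $\|\mathbf{g}_t^S\|\ge\tfrac{\alpha}{\beta}(\|\mathbf{g}_t\|-\beta\epsilon)$ obtained from the sandwich $\|\mathbf{g}_t\|/\beta\le\|\mathbf{H}_t^{-1}\mathbf{g}_t\|\le\|\hat{\mathbf{p}}_t\|+\epsilon\le\|\mathbf{g}_t^S\|/\alpha+\epsilon$ of Eq.~\eqref{eq:ulb_subset}; squaring this, bounding $\|\mathbf{g}_t\|\le\nabla_{\max}$, applying PL$^*$, and unrolling gives the theorem. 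In other words, the $\alpha^2/\beta^2$ factor comes from squaring the \emph{lower} bound on $\|\mathbf{g}_t^S\|$, not from an upper bound on $\|\mathbf{v}_t\|$. To complete your more explicit decomposition you would need an additional alignment argument or assumption relating $\mathbf{g}_t^S$ to $\mathbf{g}_t$ (which is precisely what the paper's invoked descent inequality implicitly supplies); with only the norm guarantee, your intermediate bounds fail and the constants do not assemble into the stated bound.
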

\begin{restatable}{theorem}{plsgdrestate}
\label{thm:pl-sgd}
    Under the same assumptions as in Theorem \ref{thm:pl}, for mini-batch SGD with
    %For any 
    mini-batch size $m \in \mathbb{N}$, the mini-batch SGD with update rule Eq. \eqref{eq:gd_update}, with learning rate $\eta = \frac{m}{\beta(m-1)}$,
    applied to the subsets have the following convergence behavior:
\vspace{-1mm}
\begin{align}
        \mathbb{E}[\Lc(w_{t})] \leq (1-\frac{\eta\mu \alpha^2}{2\beta})^t \mathbb{E}[\Lc(w_0)] - \frac{\alpha^2\eta}{2\beta}(  \beta\epsilon^2 -2 \epsilon\nabla_{\max})\label{eq:adacore_convergence_sgd}
\end{align} where $\alpha$ is the minimum eigenvalue of all Hessian matrices during training, and $\nabla_{\max}$ is an upper bound on the norm of the gradients, and the expectation is taken w.r.t. the randomness in the choice of mini-batch.

\end{restatable}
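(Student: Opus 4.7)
The plan is to adapt the proof of Theorem \ref{thm:pl} to the mini-batch setting, absorbing the additional stochastic variance via the specific learning-rate choice $\eta = m/(\beta(m-1))$ and re-using the preconditioned-gradient approximation bound guaranteed by \alg.

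First, I would apply $\beta$-smoothness of $\Lc$ to the update $w_{t+1} = w_t - \eta \mathbf{v}_t$, where $\mathbf{v}_t$ is the mini-batch gradient formed from $m$ examples drawn i.i.d.\ from the coreset $S$ according to the weights $\gamma_{t,\cdot}$. Taking expectation over the mini-batch randomness and using the standard variance decomposition yields
\begin{align*}
\mathbb{E}\|\mathbf{v}_t\|^2 \leq \tfrac{m-1}{m}\|\tilde{\mathbf{g}}_t\|^2 + \tfrac{1}{m}\mathbb{E}\|\gamma_{t,j}\mathbf{g}_{t,j}\|^2,
\end{align*}
where $\tilde{\mathbf{g}}_t := \sum_{j\in S}\gamma_{t,j}\mathbf{g}_{t,j}$ is the weighted gradient on the coreset. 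The per-sample second moment can then be bounded by $2\beta\,\Lc(w_t)$ via $\beta$-smoothness of each component $l_i$ in the interpolating PL$^*$ regime, exactly as in standard PL$^*$-SGD arguments.

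Next, I would plug in $\eta = \tfrac{m}{\beta(m-1)}$, which is tuned precisely so that $\tfrac{\eta^2\beta}{2}\cdot\tfrac{m-1}{m} = \tfrac{\eta}{2}$; this cancellation collapses the $\|\tilde{\mathbf{g}}_t\|^2$ terms into a single $-\tfrac{\eta}{2}\langle \nabla\Lc(w_t),\tilde{\mathbf{g}}_t\rangle$ contribution plus a harmless $\Lc(w_t)$-proportional remainder. I would then translate the \alg guarantee $\|\overbar{\mathbf{H}}_t^{-1}\mathbf{g}_t - \sum_{j\in S}\gamma_{t,j}\overbar{\mathbf{H}}_{t,j}^{-1}\mathbf{g}_{t,j}\|\leq \epsilon$ into an unpreconditioned gradient error $\|\tilde{\mathbf{g}}_t - \nabla\Lc(w_t)\|\leq \tfrac{\beta}{\alpha}\epsilon$ by left-multiplying through by the appropriate Hessian and using that all Hessian eigenvalues along the trajectory lie in $[\alpha,\beta]$. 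Combined with $\|\nabla\Lc(w_t)\|\leq \nabla_{\max}$ and Cauchy--Schwarz, this yields $\langle\nabla\Lc(w_t),\tilde{\mathbf{g}}_t\rangle \geq \tfrac{\alpha^2}{\beta^2}\|\nabla\Lc(w_t)\|^2 - \epsilon\nabla_{\max}$ up to constant factors.

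Finally, I would apply the PL$^*$ condition $\|\nabla\Lc(w_t)\|^2 \geq 2\mu\Lc(w_t)$ to turn the one-step bound into a contraction of the form $\mathbb{E}[\Lc(w_{t+1})] \leq (1 - \tfrac{\eta\mu\alpha^2}{2\beta})\mathbb{E}[\Lc(w_t)] - \tfrac{\alpha^2\eta}{2\beta}(\beta\epsilon^2 - 2\epsilon\nabla_{\max})$, and then unroll the recursion, telescoping the additive error terms as a geometric series, to recover \eqref{eq:adacore_convergence_sgd}. The hard part will be the conversion from preconditioned to unpreconditioned gradient error in the previous step: since the per-sample inverse Hessians $\overbar{\mathbf{H}}_{t,j}^{-1}$ differ from the full-data inverse $\overbar{\mathbf{H}}_t^{-1}$, translating the \alg bound into a clean bound on $\|\tilde{\mathbf{g}}_t - \mathbf{g}_t\|$ without losing the intended $\alpha^2/\beta$ factor demands careful bookkeeping, and it is exactly what drives the difference between the rates in Theorem \ref{thm:pl} and the mini-batch version stated here.
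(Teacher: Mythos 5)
Your skeleton for the mini-batch part matches the paper's: both use $\beta$-smoothness, a variance decomposition of the mini-batch gradient whose per-sample second moment is absorbed into a term proportional to $\Lc(w_t)$ (the paper imports this from the PL$^*$ mini-batch analysis of Liu et al., producing the $\frac{\eta^2\beta\lambda}{m}\Lc(w_t)$ term), the same choice $\eta=\frac{m}{\beta(m-1)}$ obtained by optimizing the quadratic-in-$\eta$ coefficient, the PL$^*$ step, and unrolling. The genuine gap is your third step. You claim the \alg guarantee $\|\mathbf{H}_t^{-1}\mathbf{g}_t-\sum_{j\in S}\gamma_{t,j}\mathbf{H}_{t,j}^{-1}\mathbf{g}_{t,j}\|\leq\epsilon$ can be turned, ``by left-multiplying through by the appropriate Hessian,'' into the vector bound $\|\tilde{\mathbf{g}}_t-\mathbf{g}_t\|\leq\frac{\beta}{\alpha}\epsilon$ with $\tilde{\mathbf{g}}_t=\sum_{j\in S}\gamma_{t,j}\mathbf{g}_{t,j}$. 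There is no single matrix to multiply by: each summand carries its own preconditioner $\mathbf{H}_{t,j}^{-1}\neq\mathbf{H}_t^{-1}$, so $\mathbf{H}_t\mathbf{H}_{t,j}^{-1}\mathbf{g}_{t,j}\neq\mathbf{g}_{t,j}$, and the residual $\sum_{j}\gamma_{t,j}(\mathbf{H}_t-\mathbf{H}_{t,j})\mathbf{H}_{t,j}^{-1}\mathbf{g}_{t,j}$ is governed by the spread of the per-sample Hessians, not by $\epsilon$. Consequently the inner-product estimate $\langle\nabla\Lc(w_t),\tilde{\mathbf{g}}_t\rangle\gtrsim\frac{\alpha^2}{\beta^2}\|\nabla\Lc(w_t)\|^2-\epsilon\nabla_{\max}$, on which your contraction hinges, is not established; this is precisely the step you defer to ``careful bookkeeping,'' and it does not go through as described.

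The paper sidesteps this entirely by never bounding the difference vector: it argues only at the level of norms. Using $\alpha I\preceq\mathbf{H}\preceq\beta I$, the lower bound $\|AB\|\geq\sigma_{\min}(A)\|B\|$, and the reverse triangle inequality, it derives $\frac{\|\mathbf{g}_t\|}{\beta}\leq\|\mathbf{H}_t^{-1}\mathbf{g}_t\|\leq\|\sum_{j\in S}\gamma_{t,j}\mathbf{H}_{t,j}^{-1}\mathbf{g}_{t,j}\|+\epsilon\leq\frac{\|\mathbf{g}_t^S\|}{\alpha}+\epsilon$, hence $\|\mathbf{g}_t^S\|\geq\frac{\alpha}{\beta}(\|\mathbf{g}_t\|-\beta\epsilon)$; it then inserts this lower bound into the expected one-step decrease, expands the square, bounds the cross term by $\nabla_{\max}$, applies PL$^*$, and optimizes $\eta$. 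If you replace your step 3 with this norm-level argument, the rest of your plan (variance decomposition, choice of $\eta$, PL$^*$, unrolling) reproduces the stated bound; as written, the proposal has a hole at its central step.
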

The proofs can be found in Appendix  \ref{proof:4.3}.

We show an {exponential} convergence 
for GD (Theorem \ref{thm:pl}) and SGD (Theorem \ref{thm:pl-sgd}) under the $\mu$-PL$^*$ condition, as well as for second order methods (Theorems \ref{thm:newton}, \ref{thm:adahessian}), under $\alpha$-strongly convex and $\beta$-smooth assumptions on the loss.
\section{Experiments}%\vspace{-2mm}
In this section, we evaluate the effectiveness of \alg, 
by answering the following questions:
(1) how does the performance of various first and second-order methods compare when applied to subsets found by \alg vs. the full data and baselines; (2) how effective is \alg for extracting crucial subsets for training convex and non-convex over-parameterized models with different optimizers; and (3) how does \alg perform in eliminating redundancies 
%in the training data and extracting essential subsets for learning. 
and enhancing diversity of the selected elements. %\ba{better?}

\textbf{Baselines} In the convex setting, we compare the performance of \alg with \craig \cite{mirzasoleiman2020coresets} that extracts subsets that approximate the full gradient, as well as Random subsets. For non-convex experiments, we additionally compare \alg with \gradmatch and \glister \cite{killamsetty2021grad,killamsetty2020glister}. 
For \alg and \craig, we use the gradient w.r.t the input to the last layer, and for \glister and \gradmatch we use the gradient w.r.t the penultimate layer, as specified by the methods.
In all cases, we select subsets separately from each class proportional to the class sizes,
and 
train on
the union of the subsets.
We
report average test accuracy across 3 trials in all experiments. 

\subsection{Convex Experiments}
In our convex experiments, we apply \alg to select a coreset to classify the Ijcnn1 dataset using L2-regularized logistic regression: $f_i(x) = ln(1-\text{exp}(-w^{T}x_yy_i)) + 0.5 \mu w^{T}w$. Ijcnn1 includes 49,990 training and 91,701 test data points of 22 dimensions, from 2 classes with 9-to-1 class imbalance ratio. In the convex setting, we only need to calculate the curvature once to find one \alg subset for the entire training. Hence, we utilize the complete Hessian information, computed analytically, as discussed in Appendix \ref{appx:hessian}.
We apply an exponential decay learning schedule $\alpha_k = \alpha_0 b^k$ with learning rate parameters $\alpha_0$ and $b$. For each model and method (including the random baseline) we tuned the parameters via a search and reported the best results.\looseness=-1

\textbf{\alg achieves smaller loss residual with a speedup}
Figure \ref{fig:grad_diff} compares the loss residual for SGD and Newton's method  applied to coresets of size 10\% extracted by \alg (blue), \craig (orange), and random (green) with that of full dataset (red).
We see that \alg effectively minimizes the training loss, achieving a better loss residual than \craig and random sampling. In particular, \alg matches the loss achieved on the full dataset with more than a 2.5x speedup for SGD and Newton's methods. We note that training on  random 10\% subsets of the data cannot effectively minimize the training loss.
We show the superior performance of training with SGD on subsets of size 10\% to 90\% found with \alg vs \craig in Appendix Fig. \ref{fig:normed_grad_diff_subsets}.\looseness=-1 

\textbf{\alg better estimates the full gradient}
Fig. \ref{fig:grad_diff_2} shows the normalized gradient difference between gradient of the full data vs. weighted gradient of subsets of different sizes obtained by \alg vs \craig and Random, at the end of training by each method. We see that by considering curvature information, \alg obtains a better gradient estimate than \craig and Random subsets.
\begin{figure}[t]
    \centering
% 	\vspace{-4mm}
    \subfloat[Ijcnn1 SGD ]{
	\includegraphics[width=.22\textwidth,height=.15\textwidth,trim=10mm 0 10mm 0]{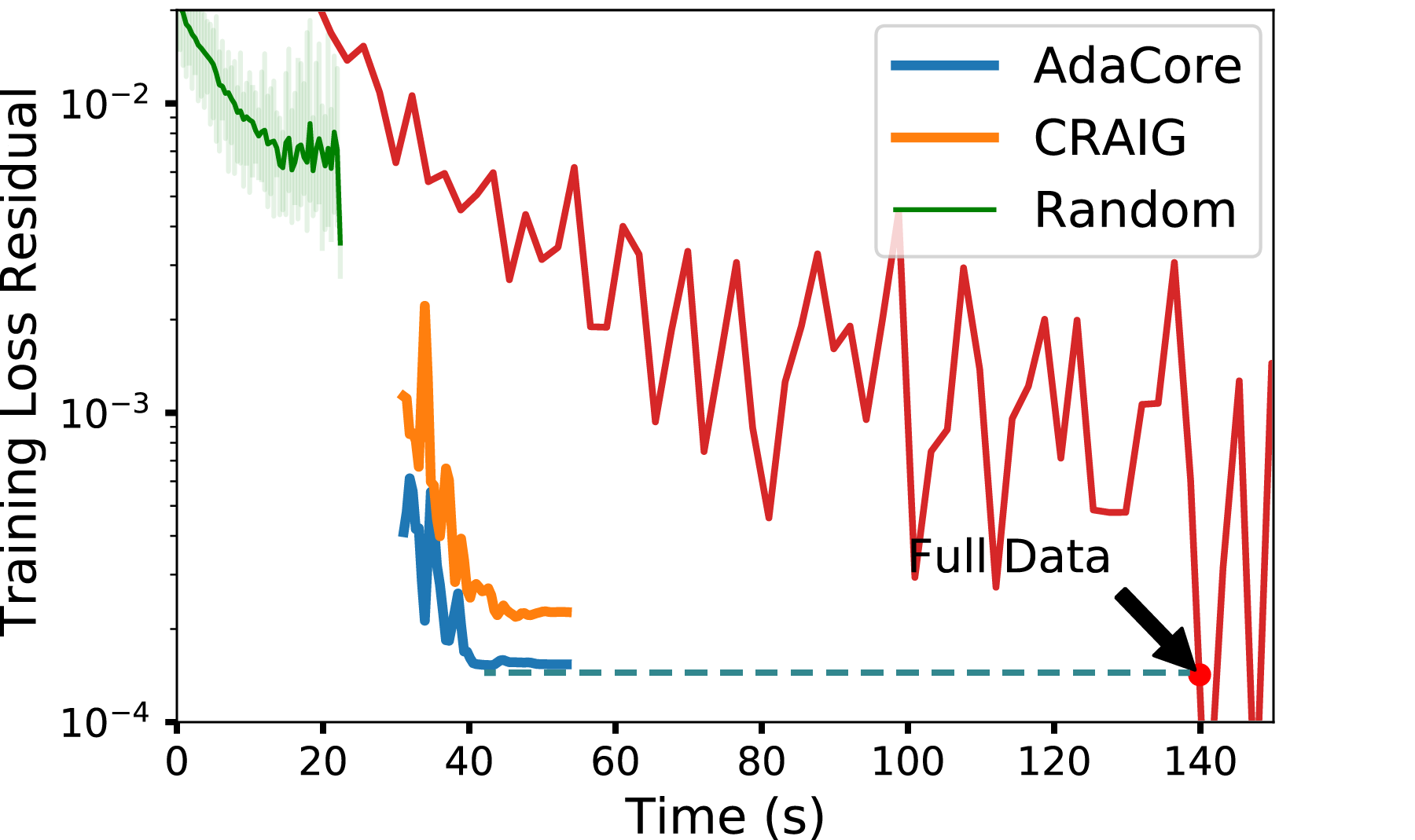}
    }
    \subfloat[Ijcnn1 Newton]{
    \includegraphics[width=.22\textwidth,height=.15\textwidth,trim=10mm 0 10mm 0]{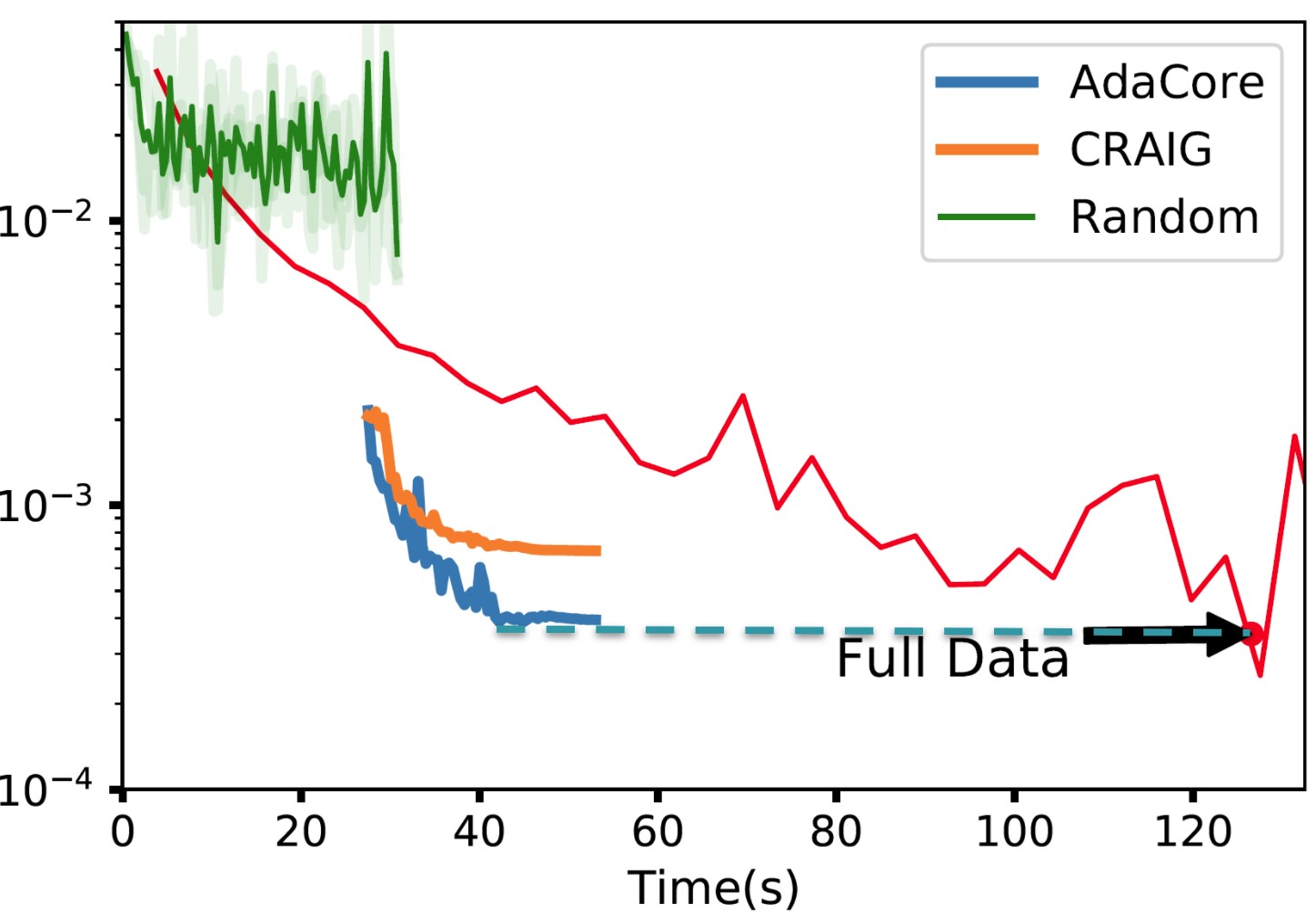}
    }
    \vspace{-2mm}
    \caption{Loss residual of SGD and Newton's method for training Logistic Regression on Ijcnn1. Comparing \alg (blue), \craig (orange) and random subsets (green) of size 10\% vs. entire data (red dot). 
    \alg achieves 2.5x speedup for training with SGD and Newton's method. 
    }
    \vspace{-2mm}
    \label{fig:grad_diff}
\end{figure}
\begin{figure}[t]
    \centering
	\vspace{-4mm}
    \subfloat[Ijcnn1 SGD ]{
	\includegraphics[width=.24\textwidth,height=.15\textwidth,trim=0 0 0 10mm 0]{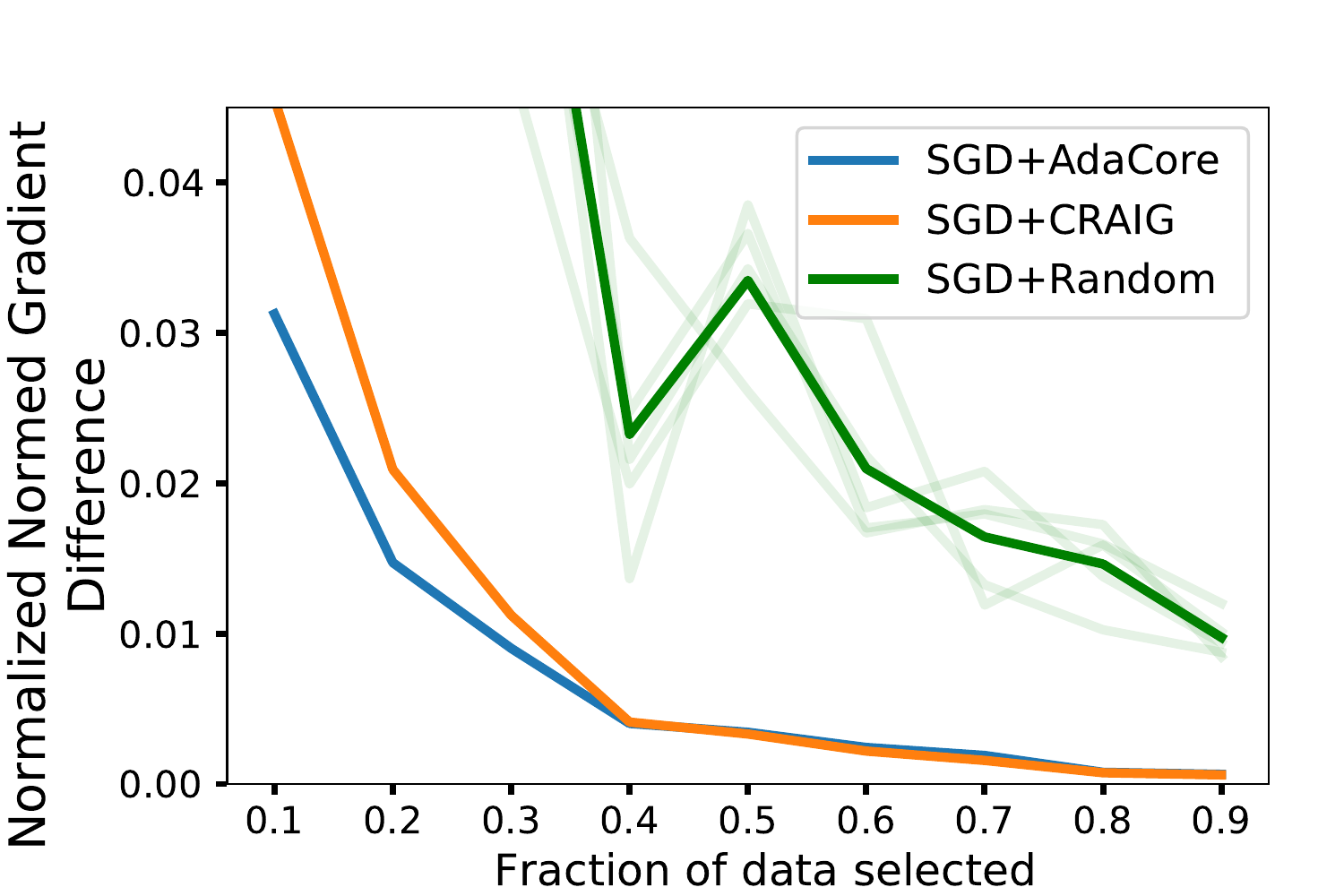}
    }
    \subfloat[Ijcnn1 Newton]{
    \includegraphics[width=.24\textwidth,height=.15\textwidth,trim=10mm 0 0 10mm 0]{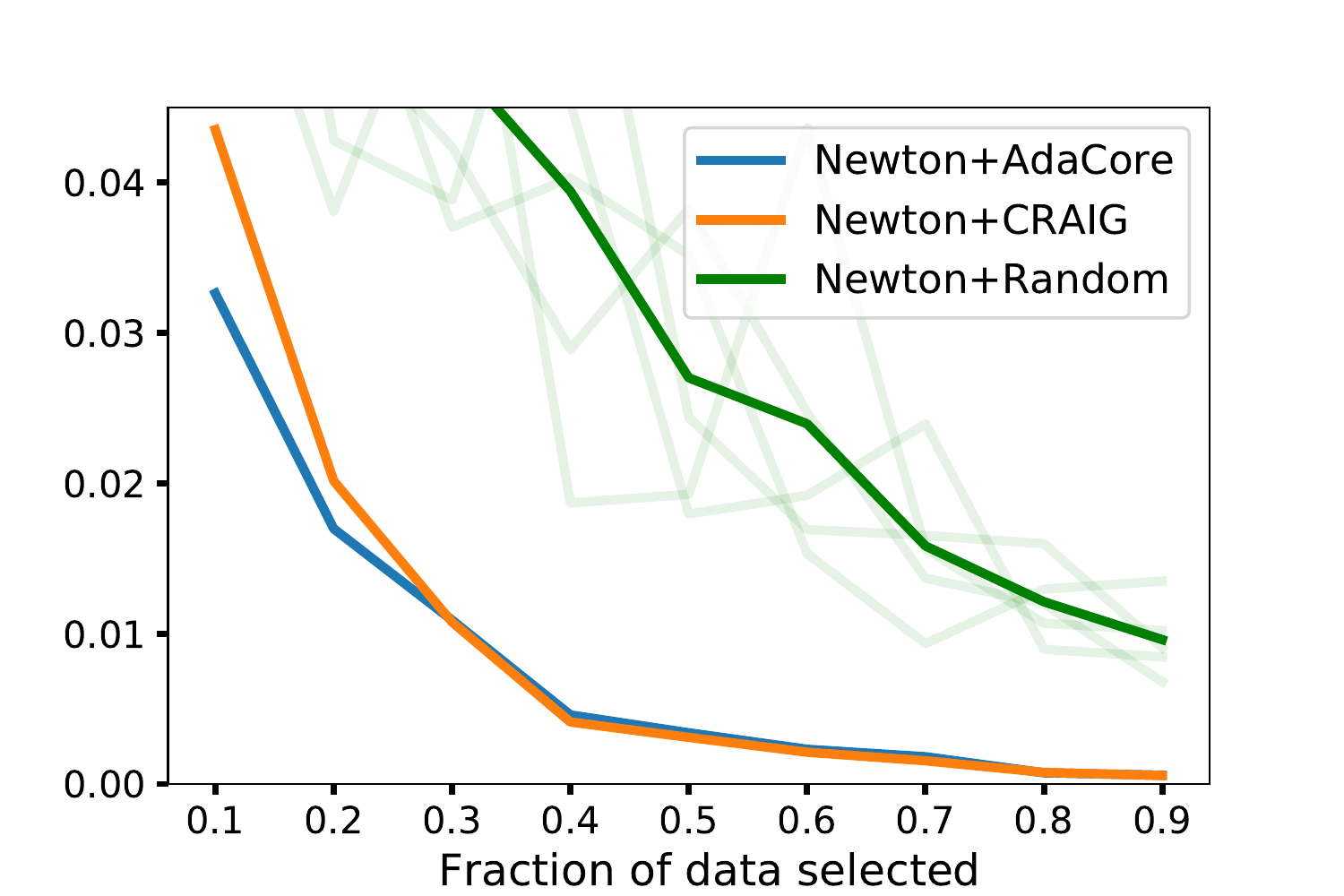}
    }\vspace{-2mm}
    \caption{Normalized gradient difference  between subsets of various sizes found by \alg (blue), \alg (orange), Random (green) vs full data, when training Logistic Regression with SGD and Newton on Ijcnn1.
    \alg has a smaller gradient error at the end of training.
    }
    \vspace{-4mm}
    \label{fig:grad_diff_2}
\end{figure}

\begin{figure*}[t]
% \vspace{-5mm}
    \centering
%  0]{Fig/test_accs_B.png}
% %     }
    \subfloat[Test Accuracy\label{subfig:R18_acc}]{
	\includegraphics[width=.22\textwidth,trim=10mm 5mm 0 0]{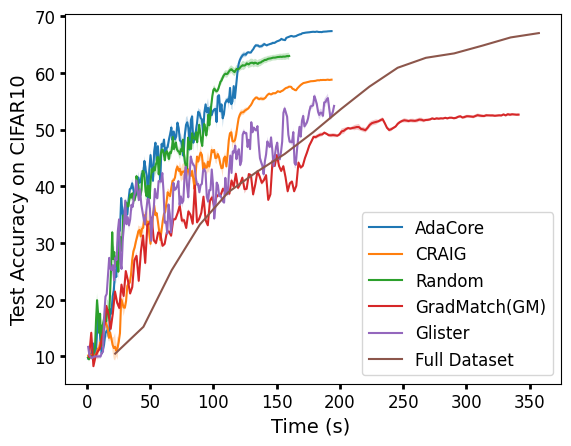}
    }    \hspace{3mm}
    \subfloat[Distribution of selected points \label{subfig:R18_hist}]{
    \includegraphics[width=.25\textwidth,trim=5mm 5mm 10mm 0]{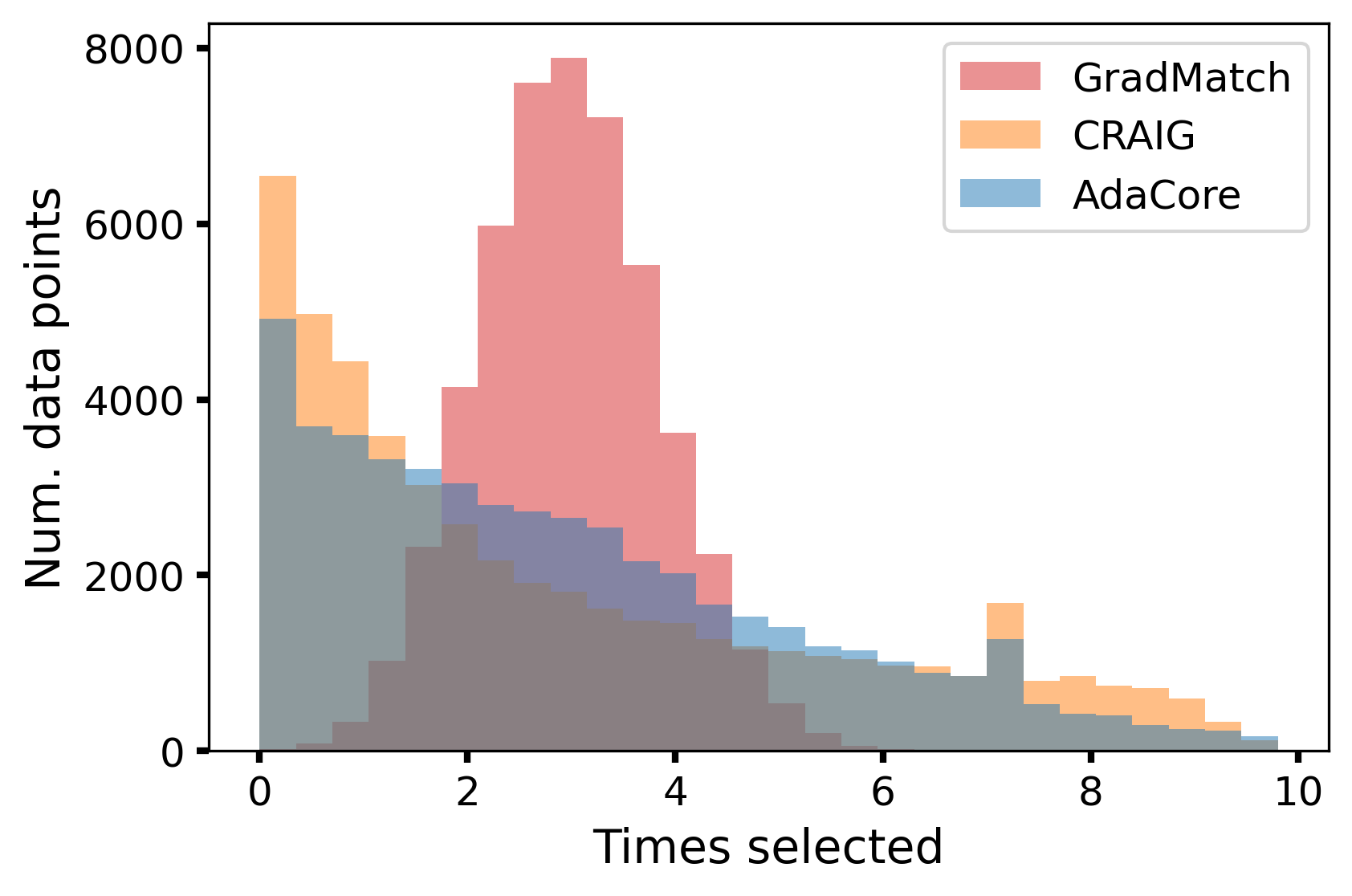}
    }\hspace{9mm}
    \subfloat[Forgetting vs class ranking \label{subfig:R18_forget}]{
    \includegraphics[width=.21\textwidth,trim=10mm 5mm 10mm 0]{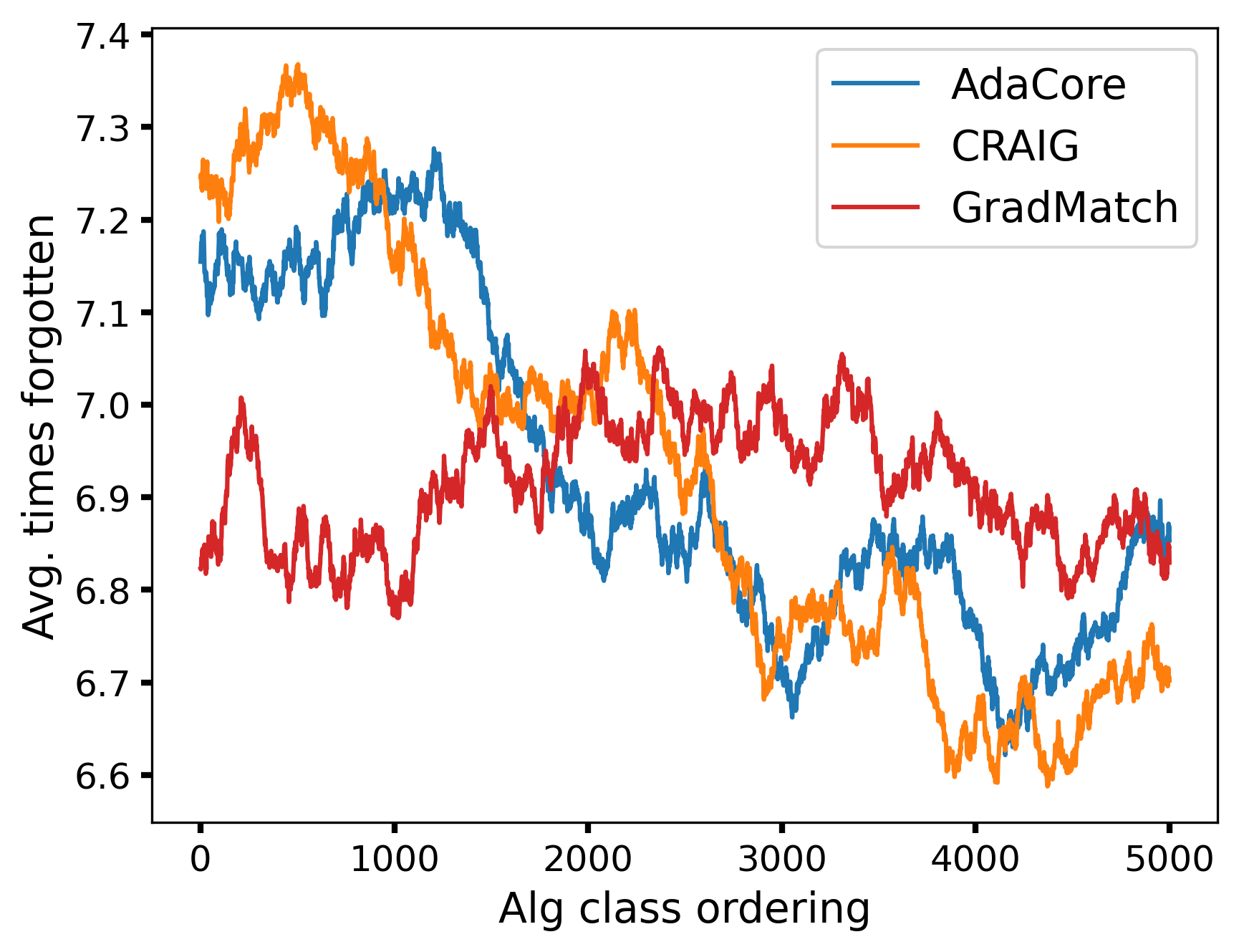}
    }
    \vspace{-4mm}
    \caption{
    Training ResNet-18 on subsets of size $S$=1\% selected every $R$=1 epoch, with \alg, \craig, \glister, \gradmatch and Random for 200 epochs vs. full for 15 epochs. 
    (a) \alg outperforms baselines by providing 2x speedup over full, and more than 4.5x speedup over Random.
 (b) Histograms of the number of times a point is selected by \alg, \craig and \gradmatch. \alg selects a more diverse set of examples compare to \craig, and \gradmatch augments several randomly selected examples. 
 (c)Forgetting scores for examples of a class sorted by \alg at the end of training.
 \alg priorities less forgettable examples compared to \craig.
 }
 \vspace{-4mm}
    \label{fig:craig_vs_ada}
    % \vspace{-3mm}
    \end{figure*}

\begin{table}[t]
\small
\centering
\caption{Training ResNet20 using AdaHessian and SGD+momentum on coresets of size 1\% selected by 
different methods from CIFAR10. Percent of full data selected during entire training is shown (in parentheses). Using $b_H$=64, \alg 
achieves up to 16.8\% higher accuracy, while selecting a smaller fraction of data points. 
Exponential averaging of gradient and Hessian, \!and a smaller $b_H$ helps.\looseness=-1
}
\vspace{-2mm}
\resizebox{\columnwidth}{!}{
    \begin{tabular}{cccc}
        \hline
        {\color[HTML]{000000} \textbf{}} & {\color[HTML]{333333} \textbf{AdaHessian}}  & {\color[HTML]{333333} \textbf{SGD+Momentum}} \\ \hline
        \multicolumn{1}{l|}{Random } &   $59.1\% \!\pm 2.8 (87\%)$ &    $45.9\% \!\pm 2.5 (87\%)$\\
        \multicolumn{1}{l|}{ \craig} &   $59.5\%  \!\pm 2.8 (74\%)$  &  $ 43.6\% \!\pm 1.6 (75\%)$ \\%\hline
        \multicolumn{1}{l|}{ \gradmatch} &   $57.5\% \!\pm 1.3 (74\%)$  &   $49.4\% \!\pm 1.6 (74\%)$ \\
        \multicolumn{1}{l|}{ \glister} &  $37.5\% \!\pm 1.3 (74\%)$ &   $38.6\% \pm 1.6 (74\%)$ \\\hline
        % \multicolumn{1}{l|}{\textbf{\alg GN} 1\%}   &  \textbf{60.2\%} \pm 2.8 (\textbf{ 73\%})  &  \textbf{53\%} \pm 2.5 ( \textbf{72\%}) &  \textbf{50.4\%} \pm 1.7 ( \textbf{74\%})\\
        \multicolumn{1}{l|}{\alg \!(no avg) }   &  $58.4\% \!\pm 0.2 (73\%)$  &  $51.5\% \!\pm 1.1 (74\%)$\\
        \multicolumn{1}{l|}{\alg \! (avg g)  }   &  $59.8\% \!\pm 0.5 (73\%)$   &  $53.2\% \pm 1.1 (74\%)$\\
        \multicolumn{1}{l|}{\alg  \!(avg H) }   & $60.2\% \!\pm 0.5 (73\%)$   &  $54.4\% \!\pm 1.1 (74\%)$\\
        \multicolumn{1}{l|}{\textbf{\alg   } }   & $ \textbf{60.2\%} \!\pm 0.5 (\textbf{73\%}) $  &  $\textbf{55.4\%} \!\pm 1.1 ( \textbf{74\%})$\\
        \hline
        \multicolumn{1}{l|}{\alg $b_h\!\!=\!\!512$ }   &  $57.2\% \pm 0.5 (73\%)$   &  $52.4\% \pm 1.1 (74\%)$\\        
    \end{tabular}
    }
    \label{table:resnet20_extended}
    \vspace{-5mm}
\end{table}

% \vspace{-3mm}
\subsection{Non-Convex Experiments}

\textbf{Datasets} We use CIFAR10 (60k points from 10 classes) , 
class imbalanced version of CIFAR10 (32.5k points from 10 classes) and CIFAR100 (32.5k points from 100 classes) \cite{cifar10}, BDD100k (100k points from 7 classes) \cite{bdd100k}. The results on MNIST (70k points from 10 classes) \cite{deng2012mnist} can be found in Appendix \ref{exp:mnist}. 
Images are normalized to [0,1] by division with 255. 

\vspace{-1mm}
\textbf{Models and Optimizers} We train ResNet-20 and
ResNet-18 \cite{he2016deep}, with convolution, average pooling and dense layers with softmax outputs and 
weight decay of $10^{-4}$.
We use a batch size of 256 in all experiments (except Table \ref{table:batch}, Fig. \ref{subfig:bdd}), and train using SGD with momentum of 0.9 (default), or AdaHessian. For training, we use standard learning rate scheduler for ResNet starting with 0.1 and exponentially decaying by factor 0.1 at epochs 100 and 150.
We used linear learning rate warm-up for the first 20 epochs to prevent weights from diverging when training with subsets.
All experiments were ran on a 2.4GHz CPU and RTX 2080 Ti GPU.

\textbf{Calculating the Curvature} To calculate the Hessian diagonal using Eq. \eqref{eq:hf}, we use a batch size of $b_H\!=\!64$ to calculate the expected Hessian diagonal over the training data. We observed that a smaller batch size provides a higher quality Hessian compared to larger batch sizes, \!as shown in Table \ref{table:resnet20_extended}.

\textbf{Baseline Comparison and Ablation Study}
Table \ref{table:resnet20_extended} shows the accuracy of training ResNet-20, using SGD with momentum of 0.9 and AdaHessian,  for 200 epochs on $S$=1\% subsets of CIFAR-10 chosen every $R$=1 epoch by different methods.
For SGD+momentum, \alg outperforms \craig by 12\%, Random by 10\%, \gradmatch by 6\%, and \glister by 16.8\%. 
Note that in total, \alg selects 74\% of the dataset during the entire training process, whereas Random visits 87\%. Thus, \alg effectively selects subsets contributing the most to generalization. We see that the accuracy gap between the baselines and \alg shrinks when applying more powerful optimizers such as AdaHessian. 
Table \ref{table:resnet20_extended} also shows the effect of exponential averaging of gradients and Hessian diagonal, and larger batch sizes for calculating the Hessian diagonal $b_H$. We see that exponential averaging help \alg achieving better performance, and smaller $b_H$ provides better results.

Fig. \ref{subfig:R18_acc} compares the performance of ResNet-18 
on 1\% subsets selected from CIFAR-10 with different methods. 
We compare the performance of training on \alg, \craig, \gradmatch, \glister, and Random subsets for 200 epochs, with 
training on full data for 15 epochs. This is the number of iterations required for training on the full data to achieve a comparable performance to that of \alg subsets.
We see that training on \alg coresets achieves a better accuracy 2.5x faster than training on the full dataset, and more than 4.5x faster than the next best subset selection algorithm for this setting
(\textit{c.f.} Fig. \ref{fig:repeat_craig_vs_ada_1000} in Appendix for complete results).

\begin{table}[b]
\vspace{-5mm}
\caption{Test accuracy and percent of full data selected (in parentheses), when selecting $S$=1\% coresets every $R$ epochs from Imbalanced CIFAR-10 to train ResNet18.
}\vspace{-2mm}
\label{table:cifar_imb_sgd_mom}
\begin{small}
\resizebox{\columnwidth}{!}{
\begin{tabular}{p{1cm}|lll}
\hline
 & {\color[HTML]{333333} $S$=$1\%$, $R$=20} & $S$=$1\%$, $R$=10 & $S$=$1\%$, $R$=5 \\ \hline
AdaCore & $\textbf{57.3\%} (\textbf{5\%})$ & $\textbf{57.12} (\textbf{9.5\%})$ & $\textbf{60.2\%} (\textbf{14.5\%})$ \\
\craig & $48.6\% (8\%)$ & $55 (16\%)$ & $53.05\%  (27.5\%)$ \\
Random & $54.7\% (8\%)$ & $54.6 (18\%)$ & $54.6\% (33.2\%)$\\
\textsc{GradM} & $29.9\% (8.2\%)$ & $29.1\% (14.7\%)$ & $32.75\%  (23.2\%)$\\
\glister & $21.1\%  (8.6\%)$ & $17.2\% (16\%)$ & $14.4\% (22.2\%)$
\end{tabular}
}
\end{small}
\vspace{-4mm}
\end{table}

\begin{table}[b]
\vspace{-4mm}
\caption{Training ResNet18 with $S$=1\%  subsets every $R$=1 epoch from CIFAR10 using batch size $b$= 512, 256, 128. 
\alg can leverage larger mini-bath size and obtain a larger
accuracy gap to \craig and Random. For $b$=512, we have\! 1\! mini-batch (GD). 
Std is reported in Appendix Table\! \ref{table:batch_pm}.
}\label{table:batch}
\vspace{-2mm}
\begin{small}
\resizebox{\columnwidth}{!}{
\begin{tabular}{l|lllll}
\hline
 & \textsc{AdaC.} & \craig & Rand & \begin{tabular}[c]{@{}l@{}}Gap/\\ \craig\end{tabular} & \begin{tabular}[c]{@{}l@{}}Gap/\\ Rand\end{tabular} \\ \hline
\begin{tabular}[c]{@{}l@{}}GD~~ b=512\end{tabular}  & \textbf{58.32}\%  & 56.32\%  & 49.14\% & 1.69\% & \textbf{8.91}\% \\
\begin{tabular}[c]{@{}l@{}}SGD b=256\end{tabular} & \textbf{68.23}\%  & 58.3\%  & 60.7\%   & \textbf{9.93}\% & 8.16\% \\
\begin{tabular}[c]{@{}l@{}}SGD b=128\end{tabular} & \textbf{66.89}\% & 58.17\% & 65.46\% & \textbf{8.81}\% & 1.52\%
\end{tabular}
}
\end{small}
\end{table}

\begin{table*}[t] 
\caption{Test accuracy and percent of full data selected (in parentheses), when selecting $S$\% coresets every $R$ epochs from CIFAR-10 and Imbalanced CIFAR-10 to train ResNet18. 
}
\label{table:cifar_imb_ada_craig}
\vspace{-3mm}
\begin{small}
\begin{tabular}{lllll}
\hline
 & \text{\begin{tabular}[c]{@{}l@{}}ResNet20, ~CIFAR10\\  $S$ = $30\%$, ~~$R$ = 20\end{tabular}} & \text{\begin{tabular}[c]{@{}l@{}}ResNet20, ~CIFAR10\\  $S$ = $10\%$, ~~$R$ = 20\end{tabular}} & \text{\begin{tabular}[c]{@{}l@{}}ResNet18, ~CIFAR10-IMB\\  $S$ = $30\%$, ~~$R$ = 20\end{tabular}} & \text{\begin{tabular}[c]{@{}l@{}}ResNet18, ~CIFAR10-IMB\\  $S$ = $10\%$,~~ $R$ = 20\end{tabular}} \\ \hline
\alg & $\textbf{80.57\%} \pm 0.11$ \; $(\textbf{74.6\%})$ & $\textbf{70.6}\%  \pm 0.33$ \; $(\textbf{44.8\%})$ & $\textbf{85.7\%} \pm 0.1$ \; $(\textbf{74\%})$ & $\textbf{76\%} \pm 0.3$ \; $(\textbf{43.8\%})$\\
\craig & $65.8\% \pm 0.41$ \; $(90.9\%)$ & $58.5\% \pm  1.27$ \; $(60.75\%)$ &  $79.3\% \pm 1.6$ \; $(84.5\%)$& $71.6\% \pm 0.15$ \; $(56.4\%)$
\end{tabular}%
% }
\end{small}
\vspace{-3mm}
\end{table*}

\textbf{Frequency and size of subsets selection} 
Table \ref{table:cifar_imb_sgd_mom}, \ref{table:cifar_imb_ada_craig} shows the performance of different methods for selecting subsets of size $S\%$ of the data every $R$ epochs, from CIFAR-10 and imbalanced CIFAR-10. 
Table \ref{table:cifar_imb_sgd_mom} shows that selecting subsets of size 1\% every $R=5, 10, 20$ epochs with \alg achieves a superior performance compared to the baselines.
Table \ref{table:cifar_imb_ada_craig} shows that \alg can successfully select larger subsets of size $S=10\%, 30\%$ and outperform \craig (Std is reported in Appendix, Table \ref{table:class_imb_sgd_mom_full}).

\textbf{\alg speeds up training}
Fig \ref{fig:speedup} compares the speedup of various methods during training ResNet18 on 10\% subsets selected every $R=20$ epochs from BDD 100k and CIFAR-100. All the methods are trained to achieve a test accuracy between 72\% and 74\% on BDD 100k, and between 57\% and 50\% on CIFAR-100. 
On BDD 100k, \alg achieves 74\% accuracy in 100 epochs and training on full data achieves a similar performance in 45 epochs. For CIFAR-100, \alg achieves 59\% accuracy in 200 epochs and training on full data achieves a similar performance in 40 epochs.
Complete results on speedup and test accuracy of each method can be found in Appendix \ref{exp:bdd100k}, \ref{exp:cifar100}. We see that \alg achieves 2.5x speedup over training on full data and 1.7x over that of training on random subsets on BDD 100k. For CIFAR-100, \alg achieves 4.2x speedup over training on random subsets and 2.9x over training on full data. Compared to the baselines, \alg can achieve achieve the desired accuracy much faster.

\textbf{Effect of batch size}
Table \ref{table:batch} compares the performance of training with different batch sizes on subsets found by various methods. We see that training with larger batch size on subsets selected by \alg can achieve a superior accuracy. 
As \alg selects more diverse subsets with smaller weights, one can train with larger mini-batches on the subsets without increasing the gradient estimate error. In contrast, \craig subsets have elements with larger weights and hence training with fewer larger mini-batches has larger gradient error and does not improve the performance.

In summary, see that \alg consistently outperforms the baselines over various architectures,  optimizers, subset sizes, selection frequency, and batch sizes. 

\textbf{\alg selects more diverse subsets} Fig. \ref{subfig:R18_hist} shows the number of times different methods selected a particular elements during the entire training. We see that \alg successfully selects a more diverse set of examples compared to \craig. We note that \gradmatch may not be able to select subsets with the desired size, and instead augments the selected subset with randomly selected examples. Hence, it has a normal-shaped distribution. Fig. \ref{subfig:R18_forget} shows mean forgetting score for all examples within a class ranked by \alg at the end of training, over sliding window of size 100. We see that \alg prioritizes selecting less forgettable examples. This shows that indeed \alg is able to distinguish different groups of easier examples better, and hence can prevent catastrophic forgetting by including their representatives in the coresets. \looseness=-1
\begin{figure}[t]
    \centering
	\vspace{-3mm}
    \subfloat[BDD100k, ResNet50 \label{subfig:bdd}]{
	\includegraphics[width=.24\textwidth,height=.15\textwidth,trim=0 0 0 10mm 0]{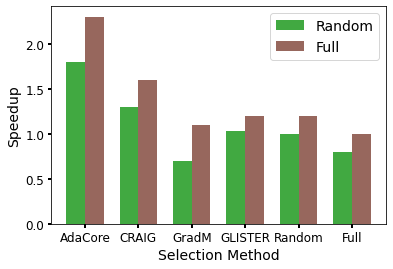}
    }~
    \subfloat[CIFAR100, ResNet18]{
    \includegraphics[width=.21\textwidth,height=.15\textwidth,trim=10mm 0 0 10mm 0]{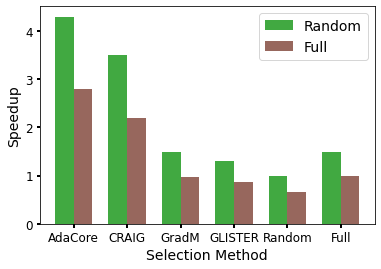}
    }
    \vspace{-2mm}
    \caption{ Speedup of various methods over training on random subsets and full data, for training ResNet18 on CIFAR100 and ResNet50 on BDD100k with batch size=128. 
    }
    \vspace{-5mm}
    \label{fig:speedup}
\end{figure}

\begin{figure}[t]
    \centering
    \vspace{-2mm}
    \subfloat[Forgetting vs class ranking \label{subfig:forget}]{
    \includegraphics[width=.21\textwidth,trim=10mm 0 10mm 0]{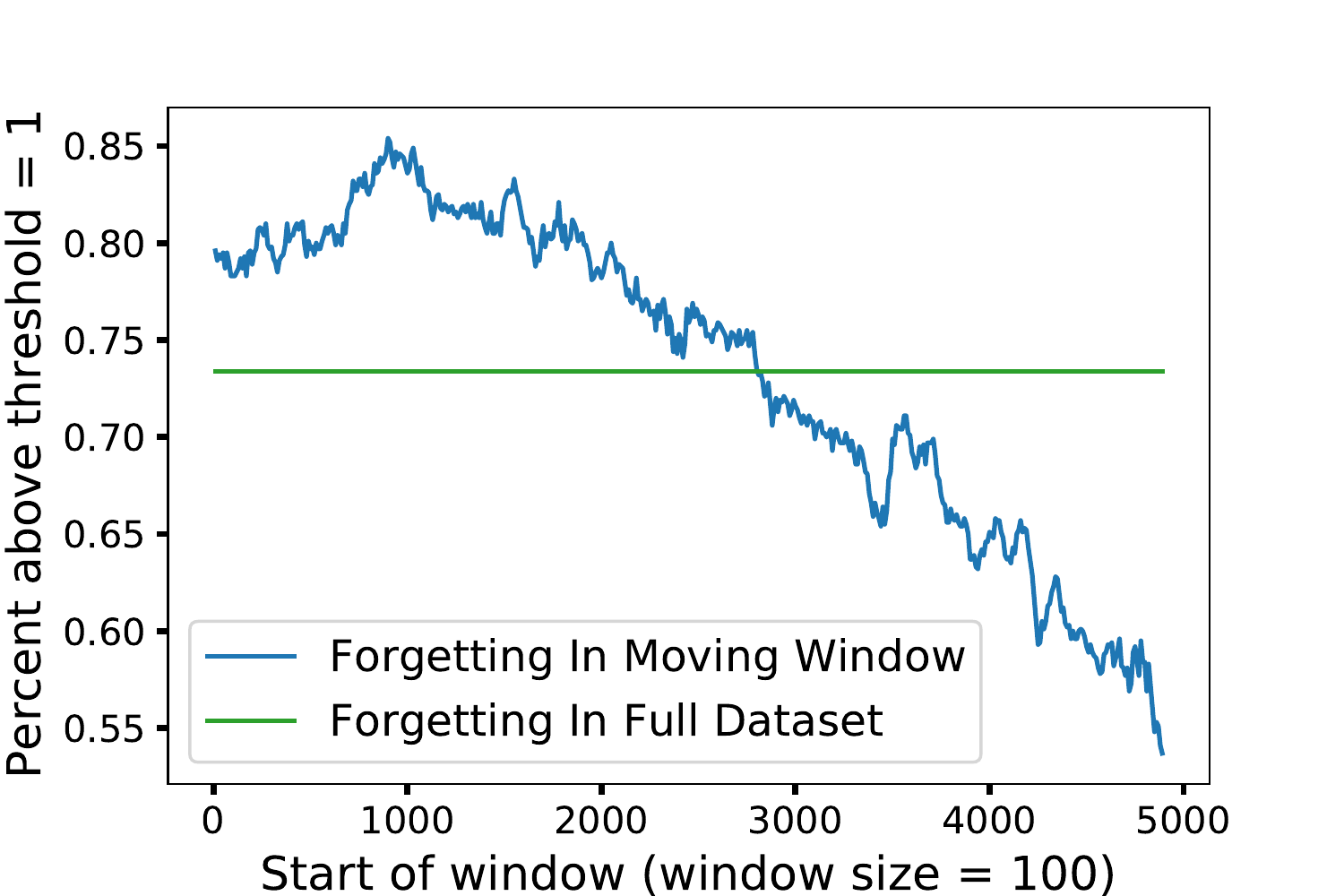}
    }\hfill 
    \subfloat[\!Uncertainty \!vs class ranking \label{subfig:certain}]{
    \includegraphics[width=.21\textwidth,trim=10mm 0 10mm 0]{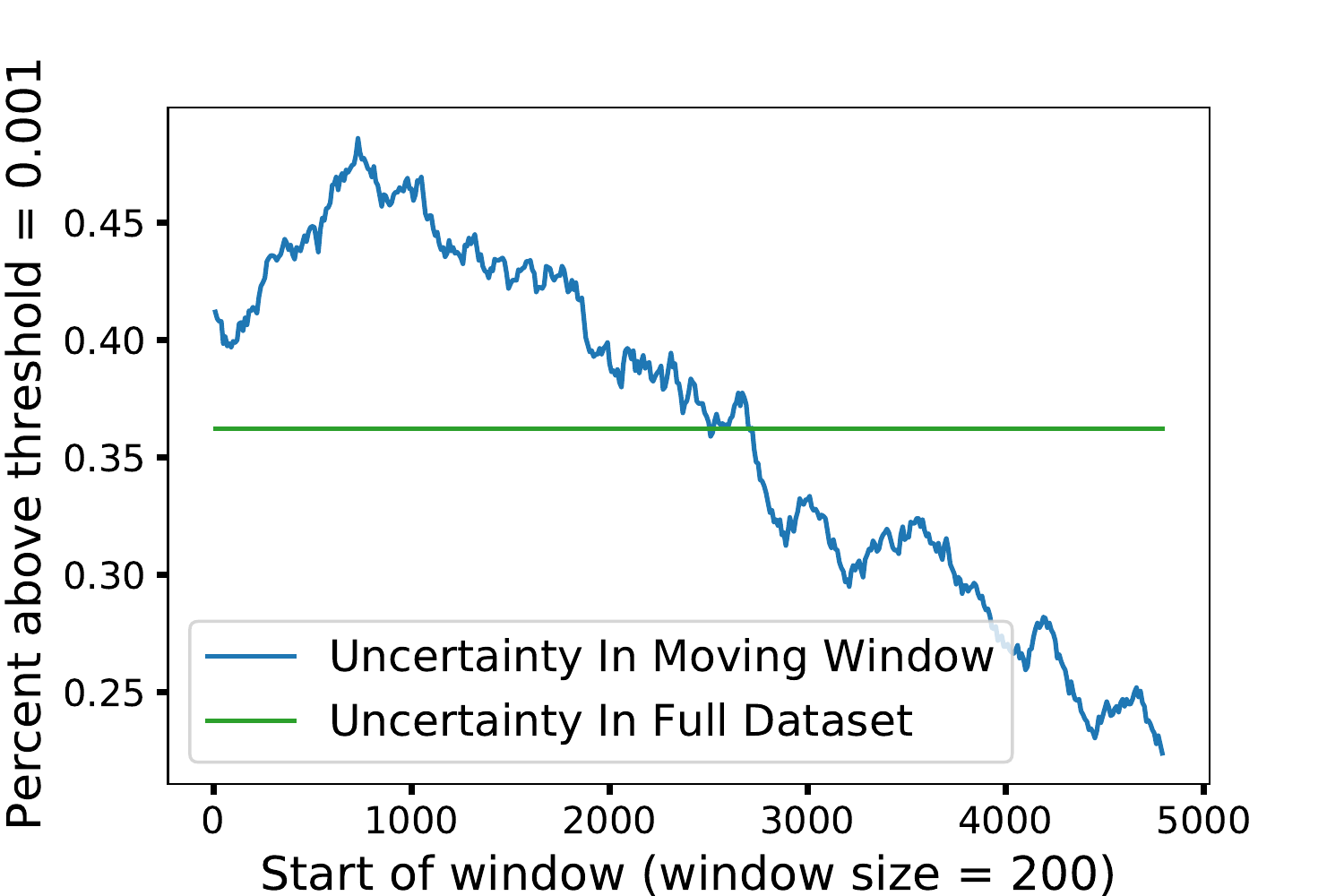}
    }
    \vspace{-4mm}
    \subfloat[Most selected \label{subfig:selected}]{\makebox[25mm][c]{
	\includegraphics[width=.07\textwidth,trim=10mm -12mm 10mm 0]{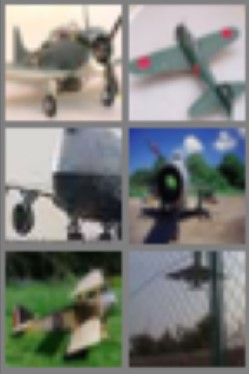}}
    }\hspace{20mm}
    \subfloat[Not selected \label{subfig:not_selected}]{\makebox[25mm][c]{
    \includegraphics[width=.07\textwidth,trim=10mm -12mm 10mm 0]{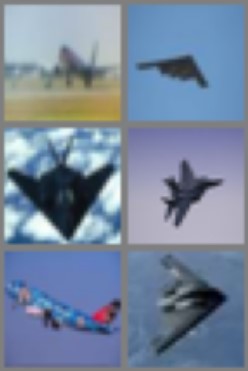}}
    }\vspace{-1mm}
    \caption{
    Training ResNet20 on $S$=1\% subsets of CIFAR-10 selected by \alg.
    (a) Forgetting scores, and (b) uncertainty of examples in a class sorted by \alg at the end of training. \alg prioritize selecting more forgettable and uncertain examples.
    (c) Six images selected by \alg most frequently (25 times) from the airplane class. (d) Subset of images never selected by \alg. %in CIFAR10 with ResNet20.
    }
    \label{fig:compare_reject_accept}
    \vspace{-5mm}
\end{figure}

\textbf{\alg vs Forgettability and Uncertainty}
Fig.
\ref{subfig:forget}, \ref{subfig:certain}
show
mean forgettability and uncertainty in sliding windows of size 100, 200 over examples sorted by \alg at the end of training.
We see
that \alg heavily biases its selections towards forgettable and uncertain points, as training proceeds. 
Interestingly, \ref{subfig:forget} reveals that \alg avoids the most forgettable samples in favor of slightly more memorable ones, suggesting that \alg can better distinguish easier groups of examples. Figure \ref{subfig:certain} shows similar bias towards uncertain samples. 
Fig. \ref{subfig:selected}, \ref{subfig:not_selected} show the most and least selected images by \alg, respectively. We see the redundancies in the never selected images, whereas images frequented by \alg are quite diverse in color, angles, occluded subjects, and airplane models. This confirms the effectiveness of \alg in extracting the most crucial subsets for learning and eliminating redundancies.

\vspace{-2mm}\section{Conclusion}
We proposed \alg, a method that leverages the topology of the dataset to extract salient subsets of large datasets for efficient machine learning. The key idea behind \alg is to dynamically incorporate the curvature { and gradient} of the loss function via an adaptive estimate of the Hessian to select weighted subsets (coresets) which closely approximate the preconditioned gradient of the full dataset. 
We proved exponential convergence rate for first and second-order optimization methods applied to \alg coresets, under certain assumptions. 
Our extensive experiments, using various optimizers e.g., SGD, AdaHessian, and Newton's method, show that \alg can extract higher quality coresets compared to baselines, rejecting potentially redundant data points. This speeds up the training of various machine learning models, such as logistic regression and neural networks, by over 4.5x while selecting fewer but more diverse data points for training.

% % Acknowledgements should only appear in the accepted version.
\section*{Acknowledgements}
This research was supported in part
by UCLA-Amazon Science Hub for Humanity and Artificial Intelligence.

% \nocite{langley00}

\bibliography{example_paper}
\bibliographystyle{icml2022}

%%%%%%%%%%%%%%%%%%%%%%%%%%%%%%%%%%%%%%%%%%%%%%%%%%%%%%%%%%%%%%%%%%%%%%%%%%%%%%%
%%%%%%%%%%%%%%%%%%%%%%%%%%%%%%%%%%%%%%%%%%%%%%%%%%%%%%%%%%%%%%%%%%%%%%%%%%%%%%%
% APPENDIX
%%%%%%%%%%%%%%%%%%%%%%%%%%%%%%%%%%%%%%%%%%%%%%%%%%%%%%%%%%%%%%%%%%%%%%%%%%%%%%%
%%%%%%%%%%%%%%%%%%%%%%%%%%%%%%%%%%%%%%%%%%%%%%%%%%%%%%%%%%%%%%%%%%%%%%%%%%%%%%%
\newpage
\appendix
\onecolumn
\section{Proofs of Theorems}

\subsection{Proof of Theorem \ref{thm:newton}}

\newtonrestate*

\begin{proof}
\label{proof:4.1}
We prove Theorem \ref{thm:newton} (similarly to the proof of  Newton's method in \cite{10.5555/993483})  for the following general update rule for $0\leq k\leq1$:
\begin{align}
    \Delta w_t = \mathbf{H}_t^{-k}\mathbf{g}_t\\
    w_{t+1} = w_t - \eta\Delta w_t
\end{align}
    For $k=1$, this corresponds to the update rule of the Newton's method. 
    Define $\lambda(w_t) = (\mathbf{g}_t^T \mathbf{H}_t^{-k} \mathbf{g}_t)^{1/2}$. 
    Since $\Lc (w)$ is $\beta$-smooth, we have
    \begin{align}
        \Lc (w_{t+1}) &\leq \Lc (w_t)-\eta \mathbf{g}_t^T \Delta w_t + \frac{\eta^2 \beta \|\Delta w_t\|^2}{2}\\
        &\leq \Lc (w_t)-\eta\lambda(w_t)^2+\frac{\beta}{2\alpha^k}\eta^2\lambda(w_t)^2,
    \end{align}
    where in the last equality, we used
    \begin{align}
        \lambda(w_t)=\Delta w_t \mathbf{H}_t^k \Delta w_t^T.
    \end{align}
    Therefore, using step size $\hat{\eta}=\frac{\alpha^k}{\beta}$ we have $w_{t+1} = w_t-\hat{\eta}\Delta w_t$
    \begin{align}
        \Lc (w_{t+1})\leq \Lc (w_t)-\frac{1}{2}\hat{\eta}\lambda(w_t)^2
    \end{align}
    Since $\alpha I \preceq \mathbf{H}_t \preceq \beta \emph{I}$, we have
    \begin{align}
        \lambda(w_t)^2 = \mathbf{g}_t^T \mathbf{H}_t^{-k} \mathbf{g}_t \geq \frac{1}{\beta^k}\|\mathbf{g}_t\|^2,
    \end{align}
    and therefore $\Lc$ decreases as follows,
    \begin{align}
        \Lc (w_{t+1}) - \Lc (w_t) \leq -\frac{1}{2\beta^k}\hat{\eta}\|\mathbf{g}_t\|^2 =-\frac{\alpha^k}{2\beta^{k+1}}\|\mathbf{g}_t\|^2.
    \end{align}
    Now for the subset, from Eq. \eqref{eq:main} we have that $\|\mathbf{H}^{-1}_t \textbf{g}_t-\sum_{j\in S}\gamma_{t,j}\mathbf{H}_{t,j}^{-1} \textbf{g}_{t,j}\|\leq\epsilon$. Hence, via reverse triangle inequality $\|\mathbf{H}^{-1}_t \textbf{g}_t\|\leq \|\sum_{j\in S}\gamma_{t,j}\mathbf{H}_{t,j}^{-1} \textbf{g}_{t,j}\|+\epsilon$, and we get
    \begin{align}
        \frac{\|\mathbf{g}_t\|}{\beta}\leq\| (\mathbf{H}_t)^{-1}\mathbf{g}_t\|\leq \| (\mathbf{H}^S_t)^{-1}\mathbf{g}^S_t\pmb\gamma\|+\epsilon \leq \frac{\|\mathbf{g}^S_t\|}{\alpha}+\epsilon,
        \label{eq:ulb_subset}
    \end{align}
    where $\mathbf{g}^S_t = \sum_{j \in S} \mathbf{g}_{t,j}$ and $\mathbf{H}^S_t=\sum_{j\in S}\mathbf{H}_{t,j.}$ are the gradient and Hessian of the subset respectively. In Eq. \eqref{eq:ulb_subset} the RHS follows from operator norms and the LHS follows from 
    the following lower bound on the norm of the product of two matrices:
    \begin{align}
        \begin{aligned}
        \|AB\| &= \max_{\|x\|=1} \|x^TAB\|\\
        &= \max_{\|x\|=1} \|x^TA\|\left\|\frac{x^TA}{\|x^TA\|}B\right\|\\
        &\ge \max_{\|x\|=1} \sigma_{\min(A)}\left\|\frac{x^TA}{\|x^TA\|}B\right\|\\
        &= \max_{\|y\|=1} \sigma_{\min(A)}\left\|y^TB\right\|\\
        &= \sigma_{\min(A)}\|B\|,
        \end{aligned}
        \label{stack}
    \end{align}
    Hence,
    \begin{align}
        \|\mathbf{g}_t^S\|\geq\frac{\alpha}{\beta} (\|\mathbf{g}_t\|-\beta\epsilon)
    \end{align}
    Therefore, on the subset we have
    \begin{align}
        \Lc (w_{t+1}) - \Lc (w_t) 
        &\leq -\frac{\alpha^k}{2\beta^{k+1}}\|\mathbf{g}^S_t\|^2\\
        &\leq  
        -\frac{\alpha^{k}}{2\beta^{k+1}} (\frac{\alpha}{\beta})^2(\|\mathbf{g}_t\|-\beta\epsilon)^2\\
        &= -\frac{\alpha^{k+2}}{2\beta^{k+3}} (\|\mathbf{g}_t\|-\beta\epsilon)^2.
    \end{align}
The algorithm stops descending when $\|\mathbf{g}_t\|=\beta\epsilon$. From strong convexity we know that    
\begin{align}
    \|\mathbf{g}_t\|=\beta\epsilon\geq \alpha\|w-w_*\|
\end{align}
Hence, we get
\begin{align}
    \|w-w_*\|\leq \beta\epsilon/\alpha.
\end{align}
    
    As such we have Corollary \ref{thm:adahessian}. and when we set $k=1$ we have proof of Theorem \ref{thm:newton}.
\end{proof}

\textbf{Descent property for Equation \ref{eq:main}}
\label{proof:diagconv}
For a strongly convex function, $\Lc$,  we have that the diagonal elements of the Hessian 
are positive \cite{yao2020adahessian}.  This allows the diagonal to approximate the full Hessian which has good convergence properties. 

Given a function $\Lc (w)$ which is strongly convex and strictly smooth, %in $\mathbb{R}^d$, 
we have that $\nabla^2 \Lc (w)$ is upper and lower bounded by two constants $\beta$ and $\alpha$, so that $\alpha I \leq \nabla^2 \Lc (w) \leq \beta I$, for all $w$. For a strongly convex function the diagonal elements in diag($\mathbf{H}_{t}$) are all positive, and we have:
\begin{align}
    \alpha \leq e_i^T \mathbf{H}_{t} e_i = e_i^T \text{diag}(\mathbf{H}_{t}) e_i = \text{diag}(\mathbf{H}_{t})_{i,j} \leq \beta
\end{align}
where $e_j$ represents the natural basis vectors. Therefore, the diagonal entries of $\text{diag}(\mathbf{H}_{t})$ are in the range $[\alpha,\beta]$. Therefore, the average of a subset of the numbers are still in the range $[\alpha,\beta]$.  As such, we can prove that 
Eq. (\ref{eq:H_sub}) has the same convergence rate as its full matrix counterpart, by following the same proof as Theorem \ref{thm:newton}.

\subsection{Proof of Theorem \ref{thm:pl} and \ref{thm:pl-sgd}}
\label{proof:4.3}
% \textbf{Polyak-Lojasiewicz (PL) condition}

A loss function $\mathcal{L}(w)$ is considered $\mu$-PL on a set $\mathcal{S}$, if the following holds:
\begin{align}
    \frac{1}{2}\|\mathbf{g}\|^{2} \geq \mu\left(\mathcal{L}(w)-\mathcal{L}\left(w_{*}\right)\right), \forall w \in \mathcal{S}
    \label{eq:pl}
\end{align}
where $w_{*}$ is a global minimizer.
When additionally $\mathcal{L}\left(w_{*}\right) = 0$, 
the $\mu$-$\text{PL}$ condition is equivalent to the $\mu$-$\text{PL}^{*}$ condition 
\begin{align}
    \frac{1}{2}\|\mathbf{g}\|^{2} \geq \mu\mathcal{L}(w), \forall w \in \mathcal{S}.
\end{align}

\plrestate*

For Lipschitz continuous $\mathbf{g}$ and $\mu$-PL$^*$ condition, gradient descent on the entire dataset yields
\begin{align}
    \Lc(w_{t+1}) - \Lc(w_{t}) \leq -\frac{\eta} {2}\|\mathbf{g}_{t}\|^2 \leq -\eta\mu \Lc(w_{t}),
\end{align}
% [Note: sign?]
and,

\begin{align}
    \Lc(w_{t})\leq(1-\eta\mu)^t \Lc(w_0)%\label{eq:grad_rate},
\end{align}
which was shown in \cite{liu2020toward}.

We build upon this result to an \alg subset. 

\begin{proof}    
    From Eq. \eqref{eq:ulb_subset} we have that
    \begin{align}
        \frac{\|\mathbf{g}_t\|}{\beta}\leq\| (\mathbf{H}_t)^{-1}\mathbf{g}_t\|\leq \| (\mathbf{H}^S_t)^{-1}\mathbf{g}_t^S\pmb\gamma\|+\epsilon \leq \frac{\|\mathbf{g}_t^S\|}{\alpha}+\epsilon
    \end{align}
    Hence solving for $\|\mathbf{g}_t^S\|$ we have,
    \begin{equation}
        \|\mathbf{g}_t^S\|\geq \frac{\alpha}{\beta}(\|\mathbf{g}_t\|-\beta\epsilon).
        \label{eq:s_error}
    \end{equation}

    For the subset we have
    \begin{align}
        \Lc(w_{t+1}) - \Lc(w_t) 
        &\leq -\frac{\eta}{2}\|\mathbf{g}_t^S\|^2
    \end{align}    
    By substituting Eq. (\ref{eq:s_error}) we have.
    \begin{align}
        &\leq -\frac{\eta \alpha^2}{2 \beta^2}(\|\mathbf{g}_t\|-\beta\epsilon)^2\\
        &= -\frac{\eta\alpha^2} {2\beta^2}(\|\mathbf{g}_t\|^2+\beta^2\epsilon^2-2\beta\epsilon\|\mathbf{g}_t\|)\label{eq:pre_spectral_upper}\\
        &\leq - \frac{\eta\alpha^2} {2\beta^2}(\|\mathbf{g}_t\|^2+\beta^2\epsilon^2-2\beta\epsilon \nabla_{\max})\label{eq:pl_before_ada}\\
        &\leq -\frac{\eta\alpha^2} {2\beta^2}(2\mu \Lc(w_t)+\beta^2\epsilon^2-2\beta\epsilon \nabla_{\max}) \label{eq:pl_applied_adacore}
    \end{align}
    % Where we 
    Where we can upper bound the norm of $\mathbf{g}_t$ in Eq. (\ref{eq:pre_spectral_upper}) by a constant $\nabla_{max}$. And Eq. (\ref{eq:pl_applied_adacore}) follows from the $\mu$-PL condition from Eq. (\ref{eq:pl}).

    Hence,
    \begin{align}
        \Lc(w_{t+1}) \leq (1-\frac{\eta\mu\alpha^2} {\beta^2}) \Lc(w_t) - \frac{\eta\alpha^2} {2\beta^2}(\beta^2\epsilon^2-2\beta\epsilon \nabla_{\max})
    \end{align}
    Since, $\sum_{j=0}^k(1-\frac{\eta\mu\alpha^2} {\beta^2})^j\leq\frac{\beta^2}{\eta\mu\alpha^2}$, for a constant learning rate $\eta$ we get
    \begin{align}
        \Lc(w_{t+1}) \leq (1-\frac{\eta\mu\alpha^2} {\beta^2})^{t+1} \Lc(w_0) - \frac{\eta\alpha^2} {2\beta^2}(\beta^2\epsilon^2-2\beta\epsilon \nabla_{\max})
        \label{eq:adacore_convergence}
    \end{align}
   
    \end{proof}

\plsgdrestate*
For Lipschitz continuous $\mathbf{g}$ and $\mu$-PL$^*$ condition, gradient descent on the entire dataset yields
\begin{align}
    \Lc(w_{t+1}) - \Lc(w_t) \leq -\frac{\eta} {2}\|\mathbf{g}_t\|^2 \leq -\eta\mu \Lc(w_t),
\end{align}

and,

\begin{align}
    \Lc(w_{t})\leq(1-\eta\mu)^t \Lc(w_0)\label{eq:grad_rate},
\end{align}
which was shown in \cite{liu2020toward}.

We build upon this result to an \alg subset. 

\begin{proof}    
    From Eq. \eqref{eq:ulb_subset} we have that
    \begin{align}
        \frac{\|\mathbf{g}_t\|}{\beta}\leq\| (\mathbf{H}_t)^{-1}\mathbf{g}_t\|\leq \| (\mathbf{H}^S_t)^{-1}\mathbf{g}_t^S\pmb\gamma\|+\epsilon \leq \frac{\|\mathbf{g}_t^S\|}{\alpha}+\epsilon
    \end{align}
    Hence solving for $\|\mathbf{g}_t^S\|$ we have,
    \begin{equation}
        \|\mathbf{g}_t^S\|\geq \frac{\alpha}{\beta}(\|\mathbf{g}_t\|-\beta\epsilon).
        % \label{eq:s_error}
    \end{equation}

    For the subset we have
    \begin{align}
        \Lc(w_{t+1}) - \Lc(w_t) 
        &\leq -\frac{\eta}{2}\|\mathbf{g}_t^S\|^2
    \end{align}    

    Fixing $w_t$ and taking expectation with respect to the randomness in the choice of the batch $i_t^{(1)} \dots i_t^{(m)}$ (noting that those indices are i.i.d.), we have
    \begin{align}
        \mathbb{E}_{i_t^{(1)} \dots i_t^{(m)}} [\Lc(w_{t+1}) - \Lc(w_t)] 
        &\leq -\eta(\alpha - \eta \frac{\beta}{m}(\alpha \frac{m-1}{2}+\beta)\Lc(w_t)\\
        &\leq -\underbrace{\eta(1 - \frac{\eta \beta (m-1)}{2m})}_{c_1}\|\mathbf{g}_t\|^2 + \underbrace{\frac{\eta^2\beta \lambda}{m}}_{c_2}\Lc(w_t)\\
        &\leq -c_1 \frac{\alpha^2}{\beta^2}\|\mathbf{g}_t + \beta \epsilon\|)^2 + c_2\Lc(w_t)
        % \label{eq:pl_applied_adacore}
    \end{align}
    We can upper bound the norm of $\mathbf{g}_t$ in Eq. (\ref{eq:pl_applied_adacore}) by a constant $\nabla_{max}$. And Eq. (\ref{eq:pl_applied_adacore}) follows from the $\mu$-PL condition from Eq. (\ref{eq:pl}) and assuming $\eta \leq \frac{2}{\beta}$.
    \begin{align}
        &\leq -c_1\frac{\alpha^2}{\beta^2}(\mu\Lc(w_t) - 2 \nabla_{max}\beta \epsilon + \beta^2\epsilon^2) + c_2 \Lc(w_t)\\
        &\leq -\eta(1 - \frac{\eta \beta (m-1)}{2m})\frac{\alpha^2}{\beta^2}(\mu\Lc(w_t) - 2 \nabla_{max}\beta \epsilon + \beta^2\epsilon^2) + \frac{\eta^2\beta \lambda}{m} \Lc(w_t)\\
        &= \eta(\mu\frac{\alpha^2}{\beta^2} -\eta \frac{\beta}{m}(\mu \frac{\alpha^2(m-1)}{\beta^2 2}+\lambda))\Lc(w_t) + \frac{\eta^2\beta \lambda}{m} \Lc(w_t)+ c_1 \frac{\alpha^2}{\beta^2}(2\nabla_{max} \beta \epsilon - \beta^2\epsilon^2)\\
        &=\eta \mu \frac{\alpha^2}{\beta^2} (1 - \eta \beta \frac{m-1}{2m})\mathbb{E}[\Lc(w_t)] + \eta\frac{\alpha^2}{\beta^2}(1-\eta\beta\frac{m-1}{2m})(2\nabla_{max} \beta \epsilon - \beta^2\epsilon^2)
    \end{align}
    By optimizing the quadratic term in the upper bound with respect to $\eta$ we get $\eta = \frac{m}{\beta(m-1)}$. 
    \begin{align}
    \mathbb{E}[\Lc(w_{t+1})] \leq (1-\frac{\mu \alpha^2m}{2\beta^2(m-1)}) \mathbb{E}[\Lc(w_t)] + \frac{\alpha^2m}{\beta^2}\frac{2\nabla_{max} \beta \epsilon - \beta^2\epsilon^2}{2\beta(m-1)}
    \end{align}

    Hence,
    \begin{align}
        \mathbb{E}[\Lc(w_{t+1})] \leq \left(1-\frac{\eta^*(m)\mu \alpha^2}{2\beta}\right) \mathbb{E}[\Lc(w_t)] + \frac{\alpha^2\eta^*(m)}{\beta}(\nabla_{max}  \epsilon - \beta\epsilon^2/2)%\label{eq:adacore_convergence_sgd}
    \end{align}

    \end{proof}

\subsection{Discussion on Greedy 
to Extract Near-optimal Coresets
}\label{appx:alg}
As discussed in Section \ref{sec:alg}, a greedy algorithm can be applied to find near-optimal coresets that estimate the general descent direction with an error of at most $\epsilon$ by solving the submodular cover problem Eq. \eqref{eq:cover}.
For completeness, we include the pseudocode of the greedy algorithm in Algorithm \ref{alg:greedy}. The \alg algorithm is run per class.

\begin{algorithm}[ht]
	\begin{algorithmic}[1]
		\REQUIRE Set of component functions $f_i$ for $i \in V=[n]\}$.
		\ENSURE Subset $S \subseteq V$ with corresponding per-element stepsizes $\{\gamma\}_{j\in S}$.
		\STATE $S_0 \leftarrow \emptyset, s_0=0, i=0.$
		\WHILE {$F(S) < C_1  - \epsilon$}
		\STATE $j\in {\arg\max}_{e \in V\setminus S_{i-1}} F (e|S_{i-1})$
		\STATE $S_i = S_{i-1}\cup \{j\}$
		%\STATE $\sigma_i = j$
		\STATE $i = i + 1$
		\ENDWHILE
		\FOR {$j=1$ to $|S|$}
		\STATE{$\gamma_j = \sum_{i\in V} \mathbb{I} \big[ j = {\arg\min}_{s \in S} {\max_{w \in \mathcal{W}}}  \|\mathbf{H}^{-1}_t \textbf{g}_t-\sum_{j\in S}\gamma_{t,j}\mathbf{H}_{t,j}^{-1} \textbf{g}_{t,j}\|  \big]$} %\\
		\ENDFOR
	\end{algorithmic}
	\caption{\textsc{\alg} (Adaptive Coresets for Accelerating first and second order optimization methods) }
	\label{alg:greedy}
\end{algorithm}
	\vspace{-1mm}\textbf{}

\section{Bounding the Norm of Difference Between Preconditioned Gradients}

\subsection{Convex Loss Functions}\label{proof:boundnormerror}
We show the normed difference for ridge regression. Similar results can be deduced for other loss functions such as square loss \cite{allen2016exploiting}, logistic loss, smoothed hinge losses, etc.

For ridge regression $f_i(w)=\frac{1}{2} ( \langle x_i, w \rangle - y_i )^2 + \frac{\lambda}{2} \| w\|^2$, we have $\nabla f_i(w) = x_i (\langle x_i, w \rangle - y_i) + \lambda w$. Furthermore for invertable Hessian H, let $H^{-1}_i = A$ and $H^{-1}_j = B$.
Therefore,

 \begin{align}
 \|A \nabla f_i(w)-B \nabla f_j(w)\| =\| A (x_i\langle x_i,w\rangle - x_iy_i + \lambda w) - B(x_j \langle x_j,w \rangle - x_jy_j + \lambda w) \|
 \end{align}
 \begin{align}
&= \| A x_i\langle x_i,w\rangle - Bx_j \langle x_j,w\rangle + Bx_j y_j - Ax_iy_i + \lambda(A-B)w \|\\
&= \| A x_i \langle x_i,w\rangle + Bx_j \langle x_i,w\rangle -Bx_j \langle x_i,w\rangle - B x_j \langle x_j,w\rangle \nonumber \\
&\quad\quad\quad\quad\quad + Bx_jy_j - Ax_iy_i + Bx_jy_i -Bx_jy_i + \lambda(A+B)w\|\\
&= \| \langle x_i,w \rangle (A x_i - Bx_j) + \langle x_i - x_j,w \rangle Bx_j + (y_j-y_i)Bx_j + y_i(Bx_j-Ax_i) + \lambda (A-B)w \|\\
&= \| (\langle x_i,w \rangle - y_i)(A x_i - Bx_j) + (\langle x_i - x_j,w \rangle + y_j -y_i )B x_j + \lambda (A+B)w \|\\
&\leq | \langle x_i,w \rangle - y_i | \|A x_i - Bx_j \| + |\langle x_i - x_j,w \rangle + y_j -y_i | \| Bx_j\| +  \lambda \|(A-B)w\|\\
&\leq  | \langle x_i,w \rangle - y_i | ( \|A  - B \| \| x_i\| + \| B\| \| x_i - x_j\|) + |\langle x_i - x_j,w \rangle + y_j -y_i | \| Bx_j\| \nonumber \\
&\quad\quad\quad\quad\quad+  \lambda \|(A+B)w\| \\
&\leq O(\|w\|)( \| A - B \| + \| B\| \|x_i - x_j \| ) + O(\|w\|)\|B\| \|x_j\| \|x_i-x_j\| \\
&\leq O(\|w\|)( \| A\| +\| B \| + \| B\| \|x_i - x_j \| ) + O(\|w\|)\|B\| \|x_j\| \|x_i-x_j\| \label{eq:norm_inv_hess} 
\end{align}
% Line 63 to 64
In Eq. \eqref{eq:norm_inv_hess} we have the norm of the inverse of the Hessian matrix. Since H is invertible we have min$_i \sigma_i > 0,$
\begin{align}
    \min _{i} \sigma_{i}=\inf _{x \neq 0} \frac{\|H x\|_{2}}{\|x\|_{2}} \Longleftrightarrow \frac{1}{\min _{i} \sigma_{i}}=\sup _{x \neq 0} \frac{\|x\|_{2}}{\|H x\|_{2}}\\
    \frac{1}{\min _{i} \sigma_{i}}=\sup _{x \neq 0} \frac{\|x\|_{2}}{\|H x\|_{2}}=\sup _{H^{-1} z \neq 0} \frac{\left\|H^{-1} z\right\|_{2}}{\|z\|_{2}}=\sup _{z \neq 0} \frac{\left\|H^{-1} z\right\|_{2}}{\|z\|_{2}}=\left\|H^{-1}\right\|_{2},
\end{align}
where the substitution $Hx = z$ was made, and utilized that $H^{-1}z = 0 \Longleftrightarrow z = 0$ since H is invertible.
Hence,
\begin{align}
&\leq O(\|w\|) \|B\| \|x_i - x_j\| \\
&\leq O(\|w\|) \|x_i - x_j \|
 \end{align}
For $\| x_i \| \leq 1$, and %$y_i=y_j$ 
$|y_i-y_j|\approx0$ .   

Assuming that $\|w\|$ is bounded for all $w \in \mathcal{W}$, an upper bound on the euclidean distance between preconditioned gradients can be precomputed. 

\subsection{Neural Networks}\label{proof:boundnormederrornn}
We closely follow proofs from  \cite{katharopoulos2018not} and \cite{mirzasoleiman2020coresets} to show that we can bound the difference between the Hessian inverse preconditioned gradients of an entire NN up to a constant of the difference between the Hessian inverse preconditioned gradients of the last layer of the NN, between arbitrary datapoints $i$ and $j$.\\

 Consider an $L$-layer perceptron, where $w^{(l)}\in \mathbb{R}^{M_lxM_{l-1}}$ is the weight matrix for the $l^{th}$ layer with $M_l$ hidden units. Furthermore assume $\sigma^{(l)}(.)$ is a Lipschitz continuous activation function. Then we let, 

\begin{align}
x_{i}^{(0)} &=x_{i}, \\
z_{i}^{(l)} &=w^{(l)} x_{i}^{(l-1)}, \\
x_{i}^{(l)} &=\sigma^{(l)}\left(z_{i}^{(l)}\right) .
\end{align}
With,
\begin{align}
\Sigma_{l}^{\prime}\left(z_{i}^{(l)}\right) &=\operatorname{diag}\left(\sigma^{\prime(l)}\left(z_{i, 1}^{(l)}\right), \cdots \sigma^{\prime(l)}\left(z_{i, M_{l}}^{(l)}\right)\right) \\
\Delta_{i}^{(l)} &=\Sigma_{l}^{\prime}\left(z_{i}^{(l)}\right) w_{l+1}^{T} \cdots \Sigma_{l}^{\prime}\left(z_{i}^{(L-1)}\right) w_{L}^{T}.
\end{align}
We have,

\begin{align}
\| \mathbf{H}_{i}^{-1} \textbf{g}_{i}-& \mathbf{H}_{j}^{-1}\textbf{g}_{j} \| \\
=&\left\|\left(\Delta_{i}^{(l)} \Sigma_{L}^{\prime}(z_{i}^{(L)}) (\mathbf{H}_{i}^{-1})^{(L)}\textbf{g}_{i}^{(L)}\right)(x_{i}^{(l-1)})^{T}-\left(\Delta_{j}^{(l)} \Sigma_{L}^{\prime}(z_{j}^{(L)}) (\mathbf{H}_{j}^{-1})^{(L)}\textbf{g}_{j}^{(L)}\right)(x_{j}^{(l-1)})^{T}\right\| \\
\leq &\left\|\Delta_{i}^{(l)}\right\| \cdot\left\|x_{i}^{(l-1)}\right\| \cdot\left\|\Sigma_{L}^{\prime}\left(z_{i}^{(L)}\right) (\mathbf{H}_{i}^{-1})^{(L)}\textbf{g}_{i}^{(L)}-\Sigma_{L}^{\prime}\left(z_{j}^{(L)}\right) (\mathbf{H}_{j}^{-1})^{(L)}\textbf{g}_{j}^{(L)}\right\| \\
&+\left\|\Sigma_{L}^{\prime}\left(z_{j}^{(L)}\right) (\mathbf{H}_{i}^{-1})^{(L)}\textbf{g}_{i}^{(L)}\right\| \cdot\left\|\Delta_{i}^{(l)}\left(x_{i}^{(l-1)}\right)^{T}-\Delta_{j}^{(l)}\left(x_{j}^{(l-1)}\right)^{T}\right\| \nonumber \\
\leq &\left\|\Delta_{i}^{(l)}\right\| \cdot\left\|x_{i}^{(l-1)}\right\| \cdot\left\|\Sigma_{L}^{\prime}\left(z_{i}^{(L)}\right) (\mathbf{H}_{i}^{-1})^{(L)}\textbf{g}_{i}^{(L)}-\Sigma_{L}^{\prime}\left(z_{j}^{(L)}\right) (\mathbf{H}_{j}^{-1})^{(L)}\textbf{g}_{j}^{(L)}\right\| \\
&+\left\|\Sigma_{L}^{\prime}\left(z_{j}^{(L)}\right) (\mathbf{H}_{i}^{-1})^{(L)}\textbf{g}_{i}^{(L)}\right\| \cdot\left(\left\|\Delta_{i}^{(l)}\right\| \cdot\left\|x_{i}^{(l-1)}\right\|+\left\|\Delta_{j}^{(l)}\right\| \cdot\left\|x_{j}^{(l-1)}\right\|\right) \nonumber \\
\leq & \underbrace{\max _{l}\left(\left\|\Delta_{i}^{(l)}\right\| \cdot\left\|x_{i}^{(l-1)}\right\|\right)}_{c_{l, i}} \cdot\left\|\Sigma_{L}^{\prime}\left(z_{i}^{(L)}\right) (\mathbf{H}_{i}^{-1})^{(L)}\textbf{g}_{i}^{(L)}-\Sigma_{L}^{\prime}\left(z_{j}^{(L)}\right) (\mathbf{H}_{j}^{-1})^{(L)}\textbf{g}_{j}^{(L)}\right\| \\
&+\underbrace{\left\|\Sigma_{L}^{\prime}\left(z_{i}^{(L)}\right) (\mathbf{H}_{i}^{-1})^{(L)}\textbf{g}_{i}^{(L)}\right\| \cdot \max _{l, i, j}\left(\left\|\Delta_{i}^{(l)}\right\| \cdot\left\|x_{i}^{(l-1)}\right\|+\left\|\Delta_{j}^{(l)}\right\| \cdot\left\|x_{j}^{(l-1)}\right\|\right)}_{c_{2}} \nonumber
\end{align}
From \cite{katharopoulos2018not}, \cite{mirzasoleiman2020coresets}, we have that the variation of the gradient norm is mostly captured by the gradient of the loss function with respect to the pre-activation outputs of the last layer of our neural network. Here we have a similar result, where, the variation of the gradient preconditioned on the inverse of the Hessian norm is mostly captured by the gradient preconditioned on the inverse of the Hessian of the loss function with respect to the pre-activation outputs of the last layer of our neural network. Assuming $\left\|\Sigma_{L}^{\prime}\left(z_{i}^{(L)}\right) (\mathbf{H}_{i}^{-1})^{(L)}\textbf{g}_{i}^{(L)}\right\|$ is bounded, we get
\begin{align}
    \| \mathbf{H}_{i}^{-1} \textbf{g}_{i}-& \mathbf{H}_{j}^{-1}\textbf{g}_{j} \| \leq c_1  \left\|\Sigma_{L}^{\prime}\left(z_{i}^{(L)}\right) (\mathbf{H}_{i}^{-1})^{(L)}\textbf{g}_{i}^{(L)}-\Sigma_{L}^{\prime}\left(z_{j}^{(L)}\right) (\mathbf{H}_{j}^{-1})^{(L)}\textbf{g}_{j}^{(L)}\right\| + c_2
\end{align}
where $c_1, c_2$ are constants. The above holds for an affine operation followed by a slope-bounded non-linearity $\left(\left|\sigma^{\prime}(w)\right| \leq K\right)$.

\subsection{Analytic Hessian for Logistic Regression}\label{appx:hessian}
Here we provide the analytical formulation of the Hessian of the binary cross entropy loss per data point $n$ with respect to weights $\mathbf{w}$ for Logistic Regression. 

For Binary Logistic Regression we have a loss function $\Lc(\mathbf{w})$ defined as:
\begin{align}
    \Lc(\mathbf{w}) =- \sum_{i=1}^{N} l_i(\mathbf{w})=  - \sum_{i=1}^{N} {y_i ln(\hat{\sigma}) + (1-y_i) ln(1-\hat{\sigma})}, \text{ where } \hat{\sigma}_i = \sigma(\mathbf{w^Tx_i}+b)
\end{align}
and $\sigma$ is the sigmoid function. \\

We form a Hessian matrix for each data point $i$ based on loss function $\ell_i(\mathbf{w})$ as follows:

\[
H_n = \left(\begin{array}{@{}c|c@{}}
  
    \frac{\partial}{\partial \mathbf{w^2}} l_i(\mathbf{w})
  & \frac{\partial  }{\partial \mathbf{w} \partial b} l_i(\mathbf{w}) \\
\hline
  \frac{\partial  }{\partial b \partial \mathbf{w}} l_i(\mathbf{w}) &
  \begin{matrix}
\frac{\partial  }{\partial b \partial b} l_i(\mathbf{w})
  \end{matrix}
\end{array}\right) 
= 
\left(\begin{array}{@{}c|c@{}}
  \begin{matrix}
    \hat{\sigma}_i(1-\hat{\sigma}_i)\mathbf{ x_i x_i^T}
  \end{matrix}
  & \hat{\sigma}_i(1-\hat{\sigma}_i)\mathbf{ x_i }\\
\hline
  {[\hat{\sigma}_i(1-\hat{\sigma}_i) \mathbf{x_i}]^T} &
  \begin{matrix}
    \hat{\sigma}_i (1-\hat{\sigma}_i)
  \end{matrix}
\end{array}\right) 
\]
This allows us to analytically form the Hessian information per point which is needed to precompute a single coreset which will be used throughout training of the convex regularized logistic regression problem.

\section{Further Empirical Evidence}\label{apx:empirical}

\subsection{\alg estimates full gradient closely, reaching smaller loss}

 \alg obtains a better estimate of the preconditioned gradient by considering curvature and gradient information compared to the state-of-the-art algorithm \craig and random subsets. This is quantified by calculating the difference between weighted gradients of coresets and the gradient of the complete dataset.

Fig \ref{fig:normed_grad_diff_subsets}, shows the difference in loss reached by \alg vs \craig over different subset sizes. This shows that corsets selected using \alg to classify the Ijcnn1 dataset using logistic regression can reach a lower loss over varying subset sizes than \craig . 
% \clearpage

\begin{figure}[ht]
    \centering
% 	\vspace{-5mm}
	\includegraphics[width=.4\textwidth,trim=10mm 0 0 10mm 0]{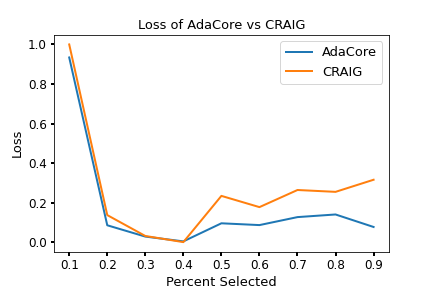}
    \caption{Normalized loss for Logistic Regression over different subset sizes on Ijcnn1 dataset using SGD. \alg corsets, considering curvature information, to classify Ijcnn1 dataset using logistic regression consistently reaches a lower loss compared to CRAIG, which only considers the gradient information.
    }
    \label{fig:normed_grad_diff_subsets}
\end{figure}

\subsection{Class imbalance CIFAR-10}
\label{exp:clasimn}
To provide further empirical evidence, we include results using a class-imbalanced version of the CIFAR-10 dataset for ResNet18. We skewed the class distribution linearly, keeping $90\%$ of class 9, $80\%$ of class 8 \dots $10\%$ of class 1, and $0\%$ of class 0, and trained for 200 epochs. Selecting a coreset for every epoch can be computationally expensive; instead, one can compute a coreset once every $R$ epochs. Here we investigate \alg's performance on various $R$ values. As Table \ref{table:class_imb_sgd_mom_full} shows, \alg can withstand class imbalance much better than \craig and randomly selected subsets. When $R=20$, \alg achieves $57.3\%$ final test accuracy, $+8.7\%$ above \craig, $+2.6\%$ above Random, 27.4\% above \gradmatch and 36.2\% above \glister. 

\begin{table}[h!]
\caption{CIFAR-10 Class Imbalance, ResNet18. Final test accuracy and percent of full data selected (in parentheses). Trained with SGD + Momentum, selecting a coreset every $R$ epochs that is $S$ percent of the full dataset. Note \alg has greater accuracy while seeing fewer data points.}
\label{table:class_imb_sgd_mom_full}
\begin{tabular}{llll}
\hline
Accuracy & {\color[HTML]{333333} $S = 1\% R = 20$} & $S = 1\% R = 10$ & $S = 1\% R = 5$ \\ \hline
\alg & $\textbf{57.3\%} \pm 0.5$ \; $(\textbf{5\%})$ & $\textbf{57.12} \pm 0.96$ \; $(\textbf{9.5\%})$ &$ \textbf{60.2\%} \pm 0.36$ \; $(\textbf{14.5\%})$ \\
\craig & $48.6\%  \pm 0.8$ \; $(8\%)$ & $55 \pm 1 $\;$(16\%)$ & $53.05\% \pm 0.24$ \;$(27.5\%)$ \\
Random & $54.7\% \pm 0.3 $\;$(8\%)$ & $54.6 \pm 0.76$ \;$(18\%)$ & $54.6\% \pm 0.74$ \;$(33.2\%)$\\
\gradmatch & $29.9\% \pm 0.4$ \;$(8.2\%)$ & $29.1\% \pm 0. 8$\; $(14.7\%)$ & $32.75\% \pm 0.83$ \;$ (23.2\%)$\\
\glister & $21.1\% \pm 0.42$ \;$(8.6\%)$ &$ 17.2\% \pm 0.75$\;$(16\%)$ & $14.4\% \pm 0.83$ \; $(22.2\%)$
\end{tabular}
\end{table}

Not only is \alg able to outperform \craig, random, \gradmatch, and \glister, but it can do so while selecting a smaller fraction of the data points during training, as shown under all settings in Table \ref{table:class_imb_sgd_mom_full}.

\subsection{Class imbalance BDD100k}\label{exp:bdd100k}
Additionally, we compared \alg to \craig and random selection for the BDD100k dataset, which has seven inherently imbalanced classes and 100k data points. We train ResNet50 with SGD + momentum for 100 epochs choosing subset size (s = 10\%) every (R = 20) epochs on the weather prediction task. We see that \alg can outperform \craig by 2\% and random by 8.8\% seen in Table \ref{table:bdd100kepoch}. 
\begin{table}[ht]\caption{\alg outperforms other baseline subset selection algorithms as well as training on the full dataset, reaching a better accuracy in less time. This provides up to a 2.3x speedup compared to to the state of the art.}
\begin{tabular}{ll}
\hline
\textbf{SGD + Momentum Accuracy} & {\color[HTML]{333333} \textbf{S = 10\% R = 20}} \\ \hline
\alg                          & 74.3\%                                          \\
\craig                            & 72.3\%                                          \\
Random                           & 65.5\%                                         
\end{tabular}\label{table:bdd100kepoch}
\end{table}\\

Additionally, Table \ref{table:bdd100k} shows that \alg outperforms baseline methods on BDD100k providing 2.3x speedup vs. training on the entire dataset and a 1.8x speedup vs. random. We see that \craig, \gradmatch \& \glister do not reach the accuracy of \alg even given more time and epochs. The epoch value is seen in parenthesis by accuracy. These experiments were run with SGD+momentum.

\begin{table}[ht]\caption{\alg outperforms other baseline subset selection algorithms as well as training on the full dataset, reaching a better accuracy in less time. This provides up to a 2.3x speedup compared to to the state of the art.}
\begin{tabular}{@{}lllll@{}}
\toprule
\textbf{}                         & \textbf{BDD100k} & \textbf{} & \multicolumn{2}{l}{Speedup over} \\ \midrule
\multicolumn{1}{l|}{\begin{tabular}[c]{@{}l@{}}$S = 10\%$\\ R = 20\end{tabular}} &
  \begin{tabular}[c]{@{}l@{}}Accuracy\\ (epoch)\end{tabular} &
  \multicolumn{1}{c}{\begin{tabular}[c]{@{}c@{}}Time\\ (s)\end{tabular}} &
  \multicolumn{1}{c}{Rand} &
  \multicolumn{1}{c}{Full} \\ \midrule
\multicolumn{1}{l|}{\alg}      & $74.3\%(100) $     & 7331      & \textbf{1.8}    & \textbf{2.3}   \\
\multicolumn{1}{l|}{\craig}        & $73.1\%(150)$      & 10996     & 1.3             & 1.6            \\
\multicolumn{1}{l|}{Random}       & $73.3\%(180)$      & 13050     & 1               & 1.2            \\
\multicolumn{1}{l|}{\gradmatch}    & $72\%(200)$        & 14040     & .7              & 1.1            \\
\multicolumn{1}{l|}{\glister}      & $73\%(200)$        & 12665     & 1.03            & 1.2            \\
\multicolumn{1}{l|}{Full Dataset} & $74.3\% (45)$      & 16093     & 0.8             & 1              \\ 
\bottomrule
\end{tabular}\label{table:bdd100k}
\end{table}

\subsection{CIFAR-100}\label{exp:cifar100}
Table \ref{table:cifar100} shows that \alg outperforms baseline methods on CIFAR100, providing 4x speedup vs. training on the entire dataset and a 3.8x speedup vs. Random. We see that \craig, \gradmatch and \glister do not reach the accuracy of \alg even given more time and epochs. The epoch value is seen in parenthesis by accuracy. These experiments were run with SGD+momentum.
\begin{table}[ht]\caption{\alg outperforms other baseline subset selection algorithms as well as training on the full dataset, reaching a better accuracy in less time. This provides up to a 4.3x speedup compared to to the state of the art.}\vspace{-2mm}
\begin{tabular}{@{}lllll@{}}
\toprule
\textbf{}                         & \textbf{CIFAR100} &      & \multicolumn{2}{l}{Speedup over} \\ \midrule
\multicolumn{1}{l|}{\begin{tabular}[c]{@{}l@{}}S = 10\%\\ R = 20\end{tabular}} &
  \begin{tabular}[c]{@{}l@{}}Accuracy\\ (epoch)\end{tabular} &
  \multicolumn{1}{c}{\begin{tabular}[c]{@{}c@{}}Time\\ (s)\end{tabular}} &
  \multicolumn{1}{c}{Rand} &
  \multicolumn{1}{c}{Full} \\ \midrule
\multicolumn{1}{l|}{\alg}      & 58.8\%(200)       & 341  & \textbf{4.3}    & \textbf{2.8}   \\
\multicolumn{1}{l|}{\craig}        & 57.3\%(250)       & 426  & 3.5             & 2.2            \\
\multicolumn{1}{l|}{Random}       & 58.1\%(864)       & 1470 & 1               & 0.65           \\
\multicolumn{1}{l|}{\gradmatch}    & 57\%(200)         & 980  & 1.5             & 0.97           \\
\multicolumn{1}{l|}{\glister}      & 56\%(300)         & 1110 & 1.3             & 0.86           \\
\multicolumn{1}{l|}{Full Dataset} & 59\% (40)         & 960  & 1.5             & 1              \\ \bottomrule
\end{tabular}\label{table:cifar100}\vspace{-5mm}
\end{table}

\subsection{When first order coresets fail, continued}
\label{apx:wcf}

By preconditioning with curvature information, \alg is able to magnify smaller gradient dimensions that would otherwise be ignored during coreset selection. Moreover, it allows \alg to include points with similar gradients but different curvature properties. Hence, \alg can select more diverse subsets compared to \craig as well as \gradmatch.
This allows \alg to outperform first order coreset methods in many regimes, such as when subset size is large (e.g. $\geq$10\%) and for larger batch size (e.g. $\geq$ 128).

\begin{figure}[ht]
    \centering
    \subfloat[\alg with gradient w.r.t the penultimate layer, training with SGD + Momentum]{
    \includegraphics[width=.41\textwidth,trim=10mm 0  10mm 0]{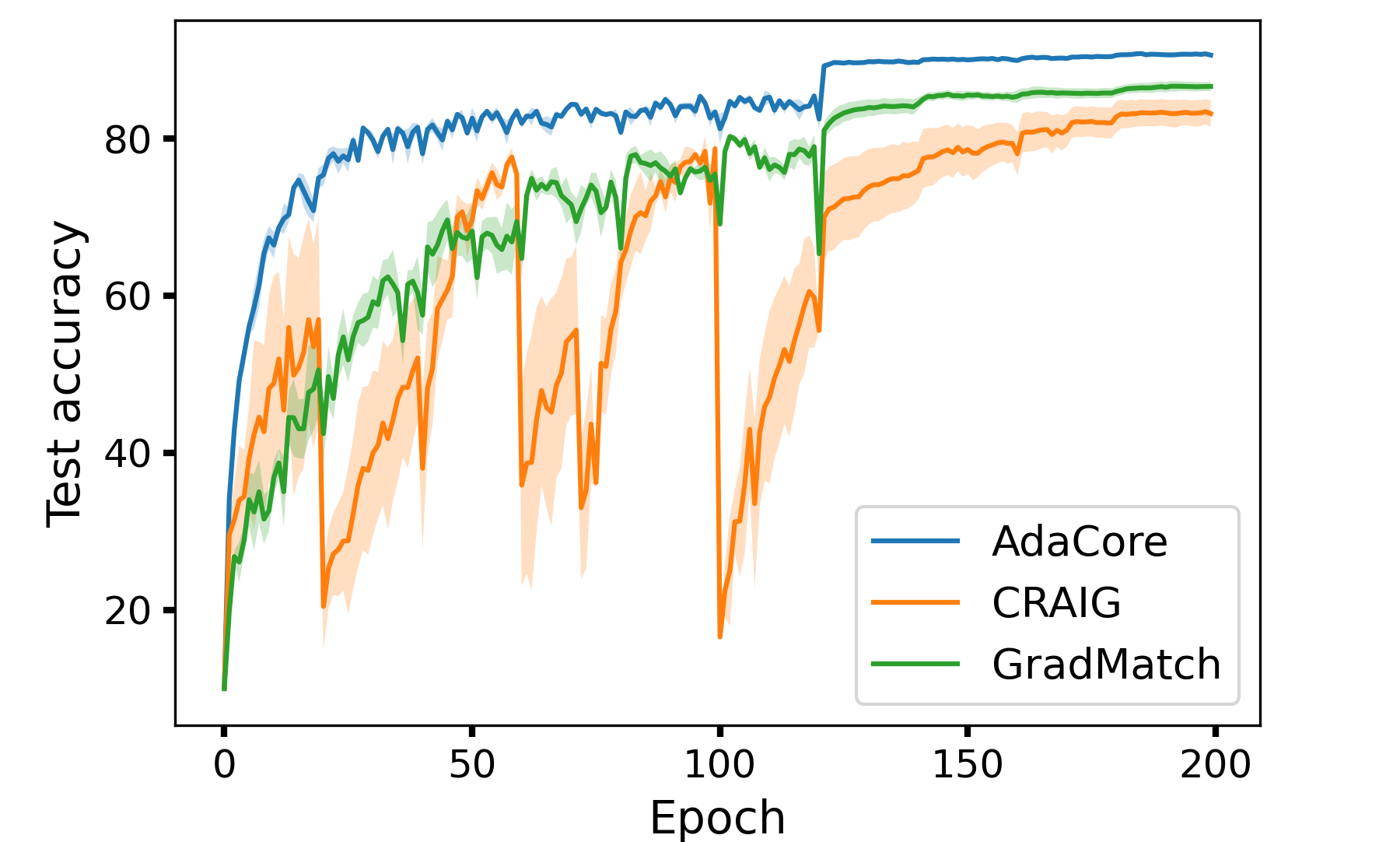}
	}
    \caption{Classification accuracy of ResNet20 across training on the CIFAR10 dataset, selecting coresets with \alg, \craig and \gradmatch. Here, all coreset selection methods used the gradients of the model's last layer (dimension 64). The algorithms were calculated every $R=20$ epochs with coreset size $S=10\%$. Note that \craig and \gradmatch are vulnerable to catastrophic forgetting, but not \alg.
    }
    \label{fig:when_craig_fails}
\end{figure}

In addition to the results shown in Figure \ref{subfig:R18_acc}, (reproduced here as Fig \ref{fig:repeat_craig_vs_ada}) where $R=1$, \alg outperforms \craig as well as \gradmatch when we increase the coreset selection period $R$. Fig \ref{fig:when_craig_fails} shows that for larger $R$, first-order methods succumb to catastrophic forgetting each time a new subset is chosen, whereas \alg achieves a smooth rise in classification accuracy. This increased stability between coresets is another benefit of \alg's greater selection diversity. 
Interestingly, \alg achieves higher final test accuracy while
selecting a smaller fraction of data points to train on during the training  than \craig. 
Note that since \alg takes curvature into account while selecting the coresets, it can successfully select data points with a similar gradient but different curvature properties and extract a more diverse set of data points than \craig. However, as the coresets found by \alg provide a close estimation of the full preconditioned gradients for several epochs during training, the number of distinct data points selected by \alg is smaller than \craig.

\begin{figure}[t]
    \centering
	\vspace{-2mm}
    \subfloat[Accuracy vs. Epoch \label{fig:repeat_craig_vs_ada}]{
	\includegraphics[width=.41\textwidth,trim=10mm 0 0 10mm 0]{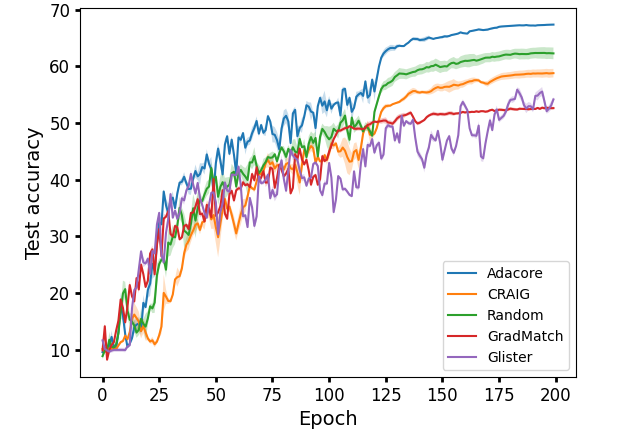}
    }~
    \subfloat[Accuracy vs. Time \label{fig:repeat_craig_vs_ada_1000}]{
    \includegraphics[width=.36\textwidth,trim=10mm 2mm 0 10mm 0]{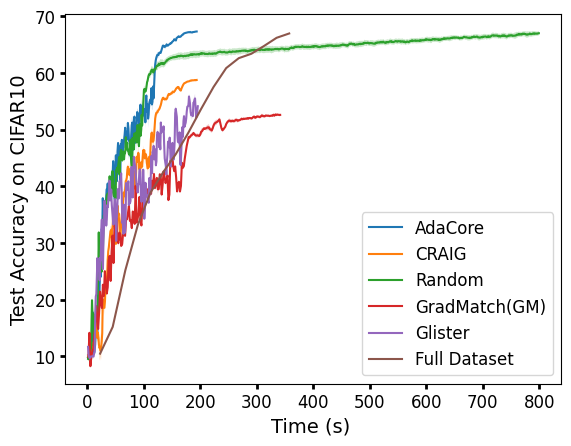}
    }
    \vspace{-2mm}
    \caption{ (a) Test accuracy of \alg, CRAIG, Random, GradMatch and GLISTER with ResNet-18 selecting subsets of size 1\% each epoch, batch size 256. (b) Training ResNet-18 on subsets of size $S$=1\% selected every $R$=1 epoch, with \alg, \craig, \glister and \gradmatch for 200 epochs vs. Random for 1000 epochs and full for 15 epochs. 
    \alg outperforms baselines by providing 2x speedup over full, and more than 4.5x speedup over Random. 
    }
    \vspace{-5mm}
    % \label{fig:speedup}
\end{figure}

For completeness we provide Fig \ref{fig:repeat_craig_vs_ada_1000}, in which we allow training random subset selection 1000 epochs. We see that it takes over 4.5x longer for Random to near the accuracy of ResNet18 trained with \alg and Full. We use the same experimental setup as seen in Fig \ref{fig:repeat_craig_vs_ada}.

\subsection{MNIST}
\label{exp:mnist}
For our MNIST classifier, we use a fully-connected hidden layer of 100 nodes and ten softmax output nodes; sigmoid activation and L2 regularization with $\mu = 10^{-4}$ and mini-batch size of 32 on the MNIST dataset of handwritten digits containing 60,000 training and 10,000 test images all normalized to [0,1] by division with 255. We apply SGD with a momentum of 0.9 to subsets of size 40\% of the dataset chosen at the beginning of each epoch found by \alg, CRAIG, and random.
Fig \ref{fig:mnist} compares the training loss and test accuracy of the network trained on coresets chosen by \alg, CRAIG, and random, with that of the entire dataset. We see that \alg can benefit from the second-order information and effectively finds subsets that achieve superior performance to that of baselines and the entire dataset. At the same time, it achieves a 2.5x speedup over training on the entire dataset.
\begin{figure}[ht]
    \centering
	\vspace{-4mm}
    \subfloat[MNIST ]{
	\includegraphics[width=.36\textwidth,trim=10mm 0 12mm 10mm]{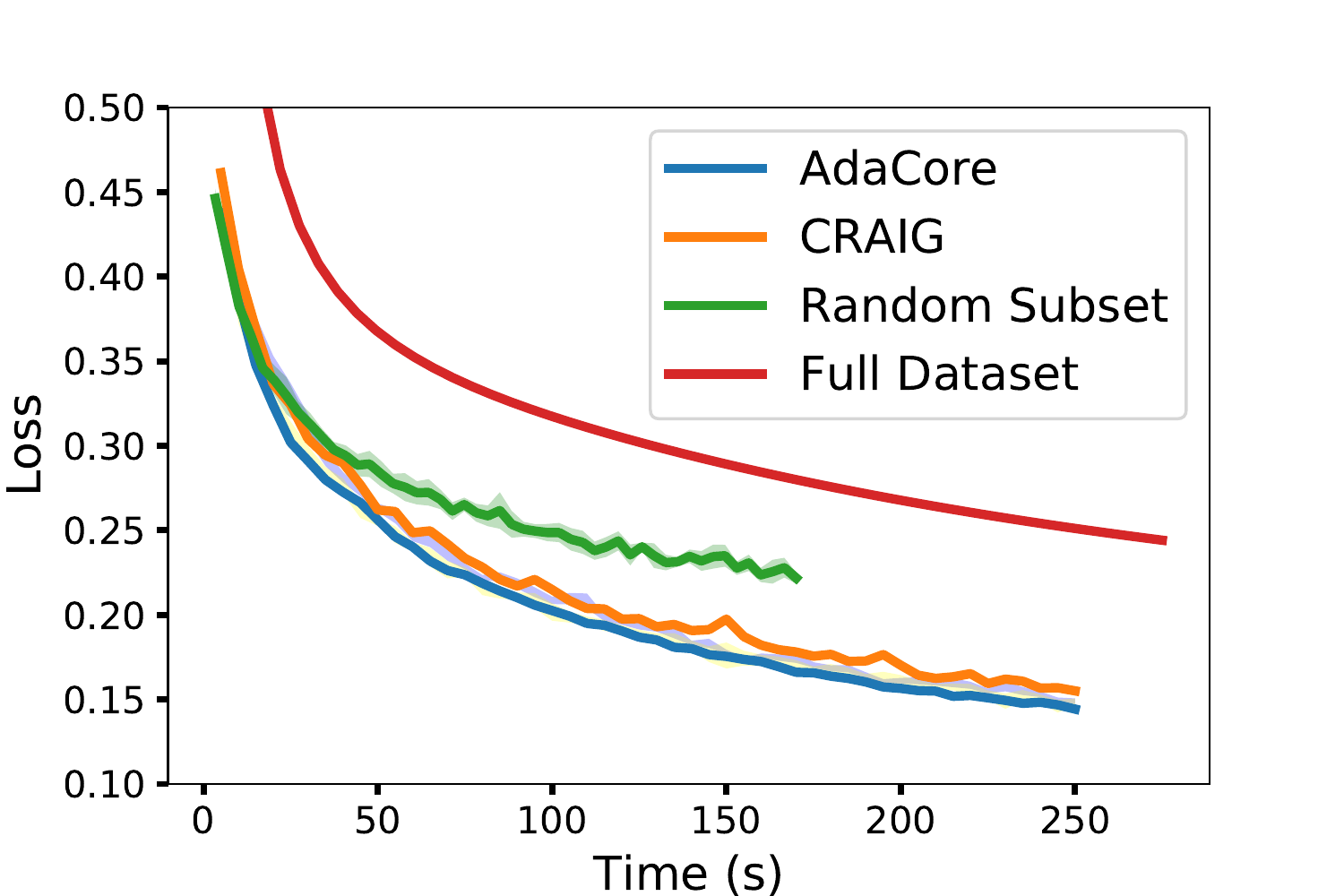}
    }\hspace{2mm}
    \subfloat[MNIST ]{
    \includegraphics[width=.36\textwidth,trim=10mm 0 12mm 10mm]{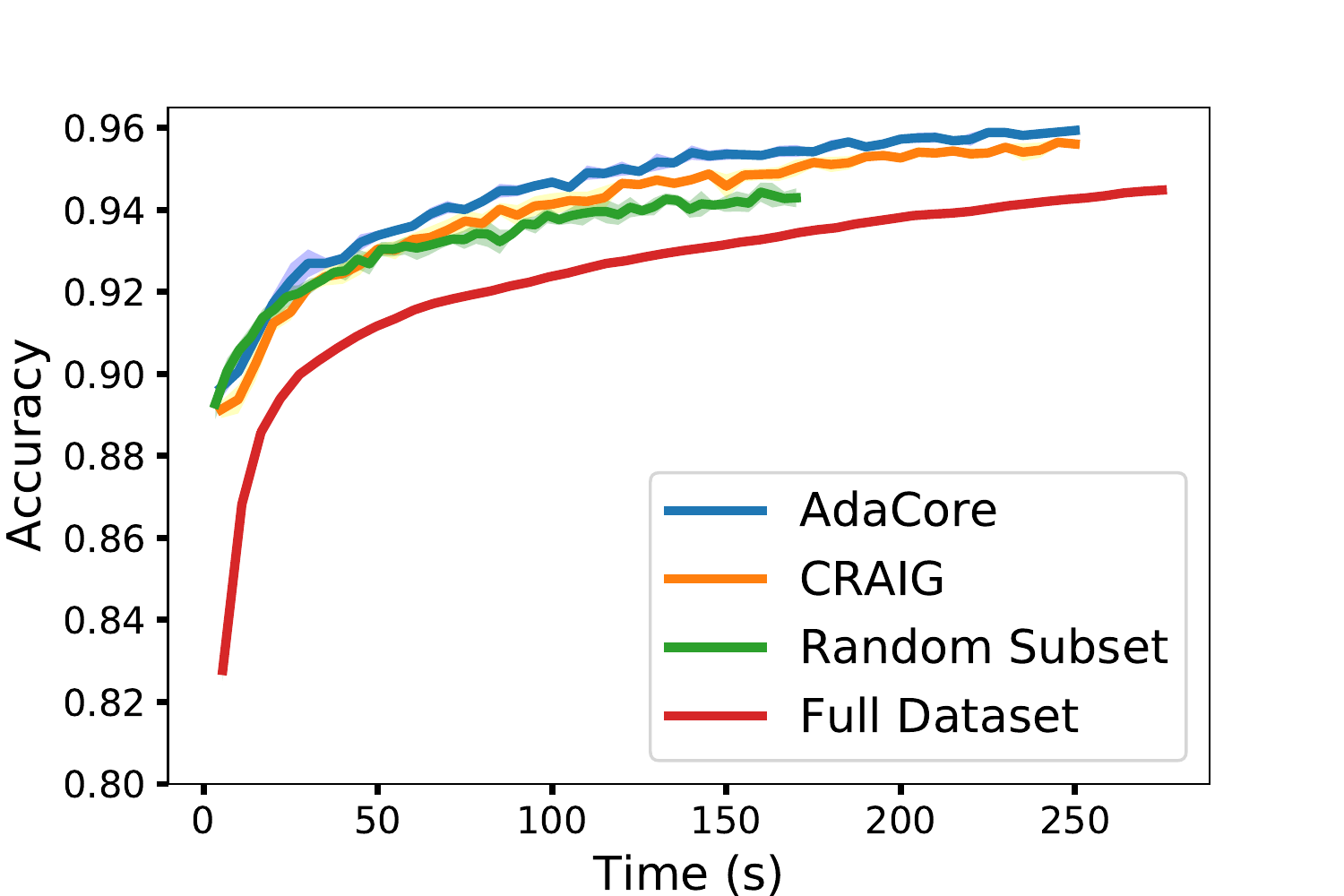}
    }
    \caption{Test accuracy and training loss of SGD with momentum applied to subsets found by \alg vs. CRAIG, and random subsets on MNIST with a 2-layer neural network. \alg achieves 2.5x speedup and better test accuracy, compared to training on full dataset.
    }
    % \vspace{-5mm}
    \label{fig:mnist}
\end{figure}

\subsection{How batch size affects coreset performance}
We see in Table \ref{table:batch_pm} that training with larger batch size on subsets selected by \alg can achieve a superior accuracy. We reproduce Table \ref{table:batch} here with standard deviation values. 
\begin{table}[ht]
% \vspace{-4mm}
\caption{Training ResNet18 with $S$=1\%  subsets every $R$=1 epoch from CIFAR10 using batch size $b$= 512, 256, 128. 
\alg can leverage larger mini-bath size and obtain a larger
accuracy gap to \craig and Random. For $b$=512, we have 1 mini-batch (GD). 
}\label{table:batch_pm}
\vspace{-3mm}
\begin{small}
\begin{tabular}{l|lllll}
\hline
 & \textsc{\alg} & \craig & Rand & \begin{tabular}[c]{@{}l@{}}Gap/\\ \craig\end{tabular} & \begin{tabular}[c]{@{}l@{}}Gap/\\ Rand\end{tabular} \\ \hline
\begin{tabular}[c]{@{}l@{}}GD~~ b=512\end{tabular}  & $58.32\% \pm 0.45$ & $56.32\% \pm 0.32$ & $49.14\% \pm 1.19$ & $1.69\%$ & $8.91\%$ \\
\begin{tabular}[c]{@{}l@{}}SGD b=256\end{tabular} & $68.23\% \pm 0.2$ & $58.3\% \pm 1.38$  & $60.7\% \pm 1.04$  & $9.93\%$ & $8.16\%$ \\
\begin{tabular}[c]{@{}l@{}}SGD b=128\end{tabular} & $66.89\% \pm 0.73$& $58.17\% \pm 1.34$ & $65.46\% \pm0.93$ & $8.81\%$ & $1.52\%$
\end{tabular}
\end{small}
\end{table}

\subsection{Potential Social Impacts}
Regarding social impact, our coreset method can outperform other methods in accuracy while selecting fewer data points over training and providing over 2.5x speedup. This will allow for a more efficient learning pipeline resulting in a lesser environmental impact. Our method can significantly decrease the financial and environmental costs of learning from big data. The financial costs are due to expensive computational resources, and environmental costs are due to the substantial energy consumption and the produced carbon footprint.

\end{document}